\documentclass{article}

\usepackage[verbose=true,letterpaper]{geometry}
  \newgeometry{
    textheight=9in,
    textwidth=5.5in,
    top=1in,
    headheight=12pt,
    headsep=25pt,
    footskip=30pt
  }

\widowpenalty=10000
\clubpenalty=10000
\flushbottom
\sloppy

\usepackage{parskip}

\PassOptionsToPackage{numbers}{natbib}
\usepackage{natbib}
\PassOptionsToPackage{dvipsnames}{xcolor}

\usepackage{hyperref}

\usepackage{float}
\usepackage[utf8]{inputenc} 
\usepackage[T1]{fontenc}    
\usepackage{url}            
\usepackage{booktabs}       
\usepackage{amsfonts}       
\usepackage{nicefrac}       
\usepackage{microtype}      
\usepackage{subcaption}
\usepackage{amsmath}
\usepackage{tabularx}
\usepackage{mathabx}
\usepackage{amssymb}
\usepackage{enumitem}
\usepackage{cleveref}
\usepackage{wrapfig}
\usepackage{multirow}
\usepackage{tcolorbox}
\usepackage{adjustbox}
\usepackage{makecell}

\definecolor{our-blue}{HTML}{1f77b4}

\usepackage[utf8]{inputenc} 
\usepackage[T1]{fontenc}    
\usepackage{hyperref}       
\hypersetup{
    colorlinks=true,
    citecolor=our-blue,
    linkcolor=our-blue,
    filecolor=magenta,      
    urlcolor=our-blue,
}

\definecolor{lightblue}{HTML}{84C7F9}
\definecolor{lighterblue}{HTML}{D4ECFF}
\newtcolorbox{mybox}{colback=lighterblue,colframe=lightblue}

\usepackage[toc,page,header]{appendix}
\usepackage{minitoc}



\date{}

\usepackage{amsthm}
\newtheorem{theorem}{Theorem}[section]
\newtheorem{proposition}[theorem]{Proposition}

\title{Bayesian Optimization of Antibodies Informed by a Generative Model of Evolving Sequences}

\author{
\normalsize \textbf{Alan Nawzad Amin}$^1$ 
\quad \textbf{Nate Gruver}$^{*1}$ \quad \textbf{Yilun Kuang}$^{*1}$ \quad \textbf{Lily Li}$^{*1}$ \\ 
\normalsize \textbf{Hunter Elliott}$^{2}$ \quad \textbf{Calvin McCarter}$^{2}$ \quad \textbf{Aniruddh Raghu}$^{2}$\\
\normalsize \textbf{Peyton Greenside}$^{2}$ \quad \textbf{Andrew Gordon Wilson}$^{1}$ \\
\normalsize $^1$New York University \quad $^2$BigHat Biosciences
}

\begin{document}

\doparttoc
\faketableofcontents 

\maketitle

\begin{abstract}
\let\thefootnote\relax\footnotetext{* Equal contribution}
    To build effective therapeutics, biologists iteratively mutate antibody sequences to improve binding and stability. Proposed mutations
    can be informed by previous measurements or by learning from large antibody databases to predict only typical antibodies. Unfortunately, the space of typical antibodies is enormous to search, and experiments often fail to find suitable antibodies on a budget. We introduce Clone-informed Bayesian Optimization (CloneBO), a Bayesian optimization procedure that efficiently optimizes antibodies in the lab by teaching a generative model how our immune system optimizes antibodies. Our immune system makes antibodies by iteratively evolving specific portions of their sequences to bind their target strongly and stably, resulting in a set of related, evolving sequences known as a \emph{clonal family}. We train a large language model, CloneLM, on hundreds of thousands of clonal families and use it to design sequences with mutations that are most likely to optimize an antibody within the human immune system. We propose to guide our designs to fit previous measurements with a twisted sequential Monte Carlo procedure. We show that CloneBO optimizes antibodies substantially 
    more efficiently than previous methods in realistic \textit{in silico} experiments and designs stronger and more stable binders in \textit{in vitro} wet lab experiments.
\end{abstract}

\section{Introduction}

Antibody therapeutics are the fastest growing class of drugs, with approved treatments for a breadth of disorders ranging from cancer to autoimmune disease to infectious disease \citep{carter2018next}. Biologists wish to design antibodies that strongly bind to targets of interest while being stable in the human body. Stable antibodies do not unfold or cause an adverse immune reaction~\citep{jarasch2015developability}.
To develop these antibodies, biologists first screen many diverse antibody sequences, or use a lab animal's immune system to find an initial candidate that binds a target.
This candidate often does not bind strongly or is unstable in the human body, so it is used as a starting point in an iterative optimization experiment in which biologists predict mutations that result in better or more stable binders \citep{Lu2020-qy}.

To make these predictions, we can learn from up to thousands of sequence measurements from many previous iterations \citep{Rapp2024-mo, Yang2019-xe, Fannjiang2022-jr, Brookes2019-fv}.
We can also learn from databases of protein sequences to avoid predicting mutations that produce nonfunctional antibodies \citep{Gruver2023-sf, Stanton2022-ru, Hie2023-sr, Prihoda2022-zy}.
However, even with this restriction, there are a combinatorial number of mutations we could predict, only a handful of which are beneficial.
Therefore, optimization experiments regularly fail to find suitable sequences on a budget.

To optimize more efficiently than current methods, we need an informed prior about where and how to mutate to positively affect binding and stability.
Ideally we could learn what mutations often lead to better sequences in optimization experiments in the lab.
Unfortunately such data is scarce.
In principle, we can instead learn from abundant data about what mutations often lead to better sequences in our bodies.
To make an antibody that binds a new target, our immune system evolves sets of related sequences known as \emph{clonal families};
through selection, sequences in clonal families accumulate mutations that increase binding to a target while maintaining stability \citep{Burnett2018-fo}.
Through large-scale sequencing efforts, we can now learn from databases that contain large numbers of these evolving sequences \citep{Olsen2022-nt}.
\begin{figure}
    \centering
    \includegraphics[width=0.95\textwidth]{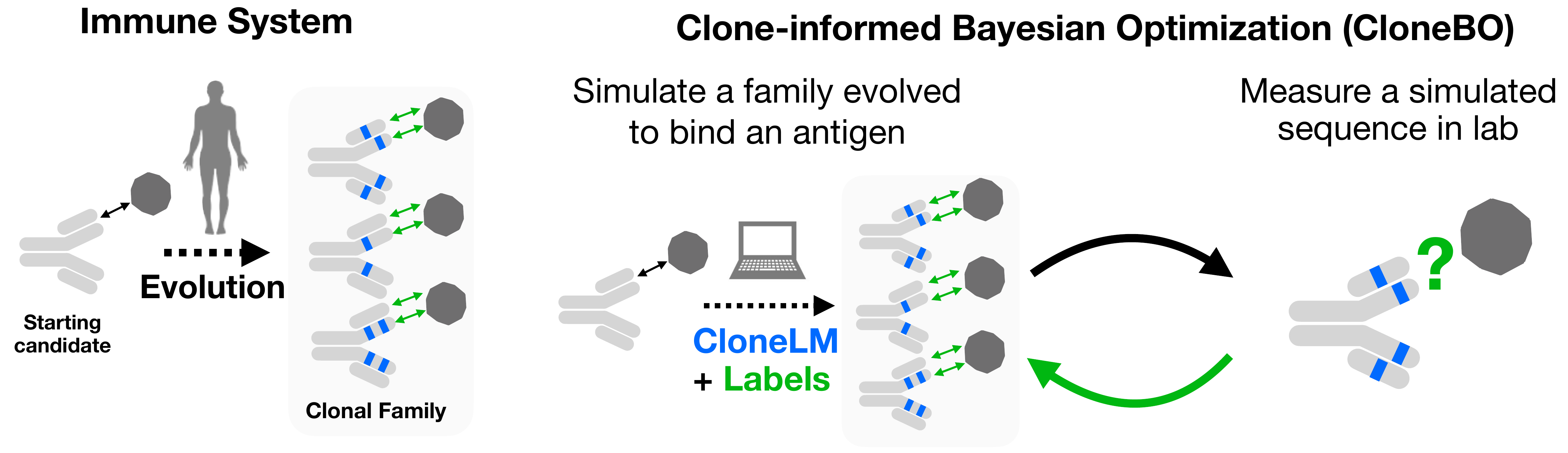}
    \caption{Our immune system introduces mutations (blue) to evolve weak binders of a target into strong binders (green). 
    The result is a set of related sequences that bind the antigen strongly and stably known as a clonal family. We use a model trained on these families, CloneLM, to perform Bayesian optimization in a procedure called CloneBO. We use experimental data to generate a clonal family that might have evolved to bind our antigen and suggest sequences to test in the lab.
    }\label{fig:concept}
\end{figure}

In this paper, we introduce Clone-informed Bayesian optimization (CloneBO), a Bayesian optimization procedure which efficiently optimizes antibody sequences in the lab by teaching a generative model how the human immune system optimizes antibodies (Fig.~\ref{fig:concept}).
In Section \ref{sec: related work} we review related work. In Section \ref{sec: background} we introduce the problem of iterative Bayesian optimization.
In Section \ref{sec: de finetti} we describe how in theory we can build a prior for where and how to mutate given observed clonal families.
In Section \ref{sec: clonelm} we build such a prior in practice by fitting a large language model, CloneLM, to hundreds of thousands of clonal families.
We take a martingale posterior approach to sampling in which we generate new clonal families that contain our candidate.
In Section \ref{sec: clonebo} we describe how to condition on previous measurements using a twisted sequential Monte Carlo procedure so that good mutations are included in our clonal family, and bad mutations are excluded.
We use our model to build a Bayesian optimization procedure, CloneBO.
In Section \ref{sec: results} we show that CloneBO optimizes realistic oracles for stability and binding strength much more efficiently than current methods and also designs strong and stable binders in wet lab experiments.
CloneBO outperforms naive and informed greedy methods as well as LaMBO, a state of the art method for optimizing sequences. 
In Section \ref{sec: conclusion} we conclude and describe directions for future work.

Our code and model weights are available at 
\url{https://github.com/AlanNawzadAmin/CloneBO}.

\section{Related work}\label{sec: related work}

To iteratively optimize a protein, one can predict sequences using previous measurements \citep{Rapp2024-mo, Yang2019-xe, Fannjiang2022-jr, Brookes2019-fv}.
To optimize antibodies for stability in the human body, \citet{Hie2023-sr} and \citet{Prihoda2022-zy} suggest introducing mutations to select for typicality, make them look more typical, measured by the likelihood of a model trained on large databases of protein sequences.
More generally, \citet{Gruver2023-sf} and \citet{Stanton2022-ru} avoid suggesting atypical protein sequences by training a latent space to represent a database of sequences and then optimizing in this latent space.
However, even the space of typical antibodies is combinatorially large, and therefore challenging to search using only up to thousands of previous measurements.
CloneBO builds an informed prior to efficiently search this space.

CloneBO builds this prior using clonal families --- sets of sequences evolving to strongly and stably bind a target \citep{Burnett2018-fo}.
Biologists infer evolutionary pressures on antibodies by examining individual clonal families \citep{Mascola2013-dr} or comparing clonal families \citep{Phad2022-bh}.
In the lab, ``repertoire mining'' optimizes antibodies by suggesting mutations from sequences in a clonal family that contains the candidate \citep{Richardson2021-xi, Olsen2023-fk}.
In practice, such a family rarely exists. CloneBO optimizes a candidate by generating new clonal families that contain the candidate and that match experimental data.

To build a prior over measurements in the lab, we assume that sequences in a clonal family are distributed with abundance according to their fitness, and that fitness is close to the function we measure in the lab.
These are standard assumptions in generative modelling of protein sequences \citep{Weinstein2022-tv} --- one fits a distribution $p$ to a set of protein sequences, then uses $\log p(X)$ as an estimate of the fitness of a sequence $X$; this fitness then correlates strongly with the function of the protein as measured in the lab \citep{Frazer2021-hn, Riesselman2018-nm, Notin2022-qw, Shin2021-lc}.
In our case, each clonal family has its own fitness function which we use to build a prior over fitness functions.
Our model, CloneLM, models clonal families as sets of sequences, similar to the architecture of \citet{Truong2023-dk} who used a language model to model protein families as sets of sequences.

For each clonal family we observe a set of sequences which evolve with respect to a latent fitness function drawn from a prior.
Instead of attempting to build an explicit latent variable model, \citet{Fong2024-bx} suggest performing Bayesian inference with a ``martingale posterior''. Instead of sampling and conditioning on the latent variable, they do the same with a large number of observations.
\citet{Lee2022-ij} suggests using this approach for Bayesian optimization.
\citet{Falck2024-fm} suggests that, with some bias, large language models can perform martingale posterior inference.
We take this approach when sampling from our prior. 
We use a large language model to flexibly fit observed sequences and sample sets of sequences, i.e. clonal families, as draws from our prior.

When proposing sequences, we sample a clonal family from an autoregressive language model, CloneLM, but condition its output to fit experimental measurements.
To do so, we build a twisted sequential Monte Carlo procedure \citep{Whiteley2014-db} in which we bias the generation of each letter towards the posterior.
This technique is used to sample from filtering models \citep{Lawson2023-hc, Lawson2024-jd}, large language models \citep{Zhao2024-au}, or diffusion models \citep{Trippe2023-ej}.

Complementary with work on iterative design are structure-based \textit{de novo} design methods which aim to predict antibody sequences that bind a particular antigen \citep{Jin2021-ub, Luo2022-ha, Kong2023-yu}.
These models have the potential to design starting sequences for iterative optimization.
These models could in principle also be used for iterative design, but cannot make use of a pool of previous measurements, and must have access to structure.
We show below empirically that these models are not well suited for this task.

\section{Background}\label{sec: background}

We start with an antibody variable domain $X_0$, a sequence of $110\sim130$ letters made of the 20 amino acids, identified to bind a target of interest. 
$X_0$ often does not bind the target strongly enough
or is unstable in the human body, making it unsuitable as a therapeutic. 
We therefore iteratively propose sequences we expect are stronger or more stable binders, $\hat X_1, \dots, \hat X_N$, and measure their binding or stability in the lab $Y_1, \dots, Y_N$.

We assume that our measurements are evaluations of a function $f$ that takes sequences to a scalar measurement of binding or stability in the lab: $Y_n=f(\hat X_n)$.
To suggest the next sequence, $\hat X_{N+1}$, given $\hat X_{1:N}, Y_{1:N}$ we can perform Bayesian optimization \cite{Frazier2018-ze}.
First we place a prior on $f$ given our known weak or unstable binder $X_0$, $p(f|X_0)$.
Then we infer $f$ by building a posterior, $p(f|X_0, \hat X_{1:N}, Y_{1:N}).$
Finally, we suggest $\hat X_{N+1}$ given our knowledge of $f$, for example by Thompson sampling: we sample a value of $f$ we believe to be plausible, $f\sim p(f|X_0, \hat X_{1:N}, Y_{1:N})$, and test the sequence that maximizes this sample, $\hat X_{N+1}=\mathrm{argmax}_Xf(X)$.

In the lab we often have a limited experimental budget, and therefore want to find a strong or stable binder in as few iterations as possible.
To do so, we need an accurate prior on $f$.
The ideal prior could in principle be constructed by performing many optimization experiments in the lab for a diverse array of targets and starting candidates and measuring $f$ in each case.
Unfortunately, performing a large number of these experiments is prohibitively expensive.

\section{A prior from fitness functions of evolving antibody sequences}\label{sec: de finetti}

While we do not have access to a large number of optimization experiments in the lab, we do have access to a large number of similar optimization experiments that occur in our bodies.
Our immune system generates antibodies by first identifying candidate sequences that bind a target.
It then evolves this sequence towards binding its target more strongly while remaining stable in the body:
    mutations are introduced to sequences and those sequences with higher ``fitness'' --- those that bind the target more strongly and stably --- are selected for reproduction.
Each starting candidate sequence typically produces many diverse sequences that have been evolved to bind a target strongly and stably.
For each optimization experiment the immune system performs, we therefore observe a set of evolved sequences $X_1, \dots, X_M$ known as a ``clonal family''.

The $f$ we measure in the lab measures binding and stability, similar to a function of the fitness of sequences under selection.
Therefore to build a prior over $f$ we start by building a prior over fitness functions $F$.
Then in Section~\ref{sec: clonebo} we allow for some discrepancy between $f$ and $F$ which may be caused by a difference between measurements in the lab and selection in our bodies.

To get a prior over functions $F$ from observed clonal families, we first note that the distribution of sequences we observe, $p(X_1, X_2, \dots)$, can be written as a Bayesian model.
The probability of observing a set of sequences in a clonal family is \textit{exchangable}, i.e. it does not depend on their order; so, by De Finetti's theorem\footnote{We ignore the dependence between the number of sequences we observe, $M$, and the sequences themselves} \citep{Hewitt1955-ua}, sequences in each clonal family are generated iid conditional on a latent random variable which we call $\mathrm{clone}$:
\[p(X_{1:M}) = \int \prod_{m=1}^M p(X_m|\mathrm{clone})p(\mathrm{clone}).\]
Next we make the standard assumption that families of evolving proteins are distributed with abundance proportional to their fitness \citep{Weinstein2022-tv}, that is, 
\begin{equation}\label{eqn: clone to fitness}
    F(X)=\log p(X|\mathrm{clone}).
\end{equation}
\citet{Sella2005-to} showed that Eqn.~\ref{eqn: clone to fitness} holds exactly if a protein evolves under $F$ over long time scales. 
In reality, sequences drawn from $p(X|\mathrm{clone})$ can also be correlated by being descendants of the same sequence, but we make the standard assumption that these correlations can be ignored \citep{Weinstein2022-tv}.
Finally, we can represent that the initial candidate $X_0$ binds the target by assuming we have observed it in the clonal family, i.e., by looking at $p(\mathrm{clone}|X_0)\propto p(\mathrm{clone})p(X_0|\mathrm{clone})$.

We can therefore sample fitness functions from $p(F|X_0)$ in theory by \textbf{1)} sampling clonal families that contain $X_0$, $\mathrm{clone}^*\sim p(\mathrm{clone}|X_0)$, and then \textbf{2)} we can set $F(X)=\log p(X|\mathrm{clone}^*)$.

\section{CloneLM: Learning a prior over fitness functions}\label{sec: clonelm}
In this section we fit a model to the distribution of clonal families and use it to sample fitness functions in practice.
In principle, we could build a model with an explicit latent variable meant to represent $\mathrm{clone}$.
Instead, we take a martingale posterior approach~\cite{Fong2024-bx} --- simply by building an accurate model of clonal sequences we learn an implicit prior on $\mathrm{clone}$ that we can approximately sample from.

In Section~\ref{sec: train clonelm} we fit an autoregressive large language model, CloneLM, to large scale clonal family data and show it can generate realistic clonal families $X_{1:M}$ that contain a candidate sequence $X_0$.
In Section~\ref{sec: martingale} we show that given a clone, $X_{0:M}$, CloneLM implicitly infers the fitness function when predicting sequences: $F(X)\approx \log p_{\mathrm{CloneLM}}(X_{M+1}|X_{0:M})$.
Finally in Section~\ref{sec: fitting llm} we show CloneLM can therefore sample fitness functions from an implicit prior on $\mathrm{clone}$ by generating realistic clonal families that contain $X_0$ then inferring their fitness functions.

\subsection{Fitting a large language model to generate clonal families}\label{sec: train clonelm}

\begin{figure}[ht!]
    \centering
    \includegraphics[width=\textwidth]{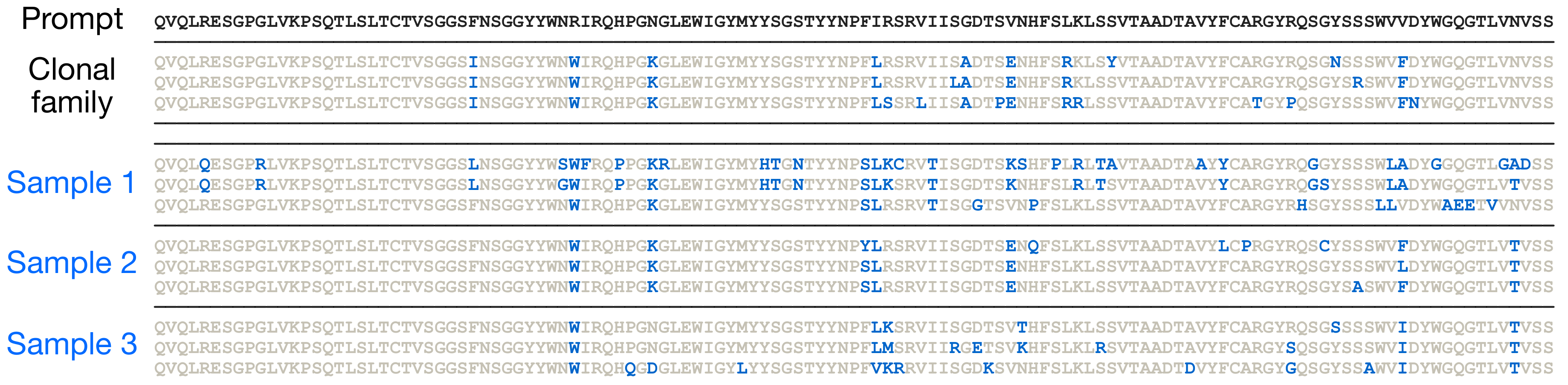}
    \caption{\textbf{CloneLM samples plausible clones.} We compare sequences in a clonal family to families generated by CloneLM conditional on $X_0$ ("Prompt").
    We align sequences to $X_0$ and highlight locations where sequences differ from $X_0$ in blue. The sampled clonal families 
    have variants in similar places, are similarly diverse as the real one, 
    and share similar variants within each family.
    }
    \label{fig: clone ex}
\end{figure}

We train a large language model on large scale human clonal family data.
Each antibody is made up of two amino acid sequences --- the ``light chain'' and the ``heavy chain''.
We train separate models on heavy and light chains of antibody sequences in clonal families.

To build a training set, we collect all sets of human heavy and light chain antibody sequences from the database of Observed Antibody Space (OAS) \citep{Olsen2022-nt}.
We annotate clonal families in each set of sequences using FastBCR \citep{Wang2023-vj} and remove any clonal family with fewer than 25 sequences.
Our dataset contains 908 thousand heavy chain clonal families and 34 thousand light chain clonal families.

We then train autoregressive language models with 377 million parameters based on the Mistral-7B architecture on the heavy and light chain datasets \citep{Jiang2023-yy}.
We represent each clonal family as a sequence of tokens made up of all the amino acid sequences in the clonal family each separated by a special sequence-separator token.
We place spaces between amino acids so that the tokenizer represents each amino acid with its own token.
Our model, CloneLM, accurately fits this data --- it achieves a test perplexity of 1.276 on the heavy chain data and 1.267 on the light chain data.
We provide details of the data curation and of training the models in App.~\ref{app: training data}.

To see if CloneLM generates realistic clonal families, in Fig.~\ref{fig: clone ex} we compare a heavy chain clonal family from the test set to clonal families generated by CloneLM conditional on a randomly selected sequence from the original family $X_0$.
We see the sampled clonal families are similarly diverse to the real clonal family, include variants in similar locations as the true clonal family, and sequences within the same sampled clonal families contain similar variants.
We show more examples of generated heavy and light chain clonal families in Appendix~\ref{app: clone examples}.

\begin{figure}[ht!]
    \centering
    \begin{subfigure}[b]{0.182\textwidth}
        \includegraphics[width=\textwidth]{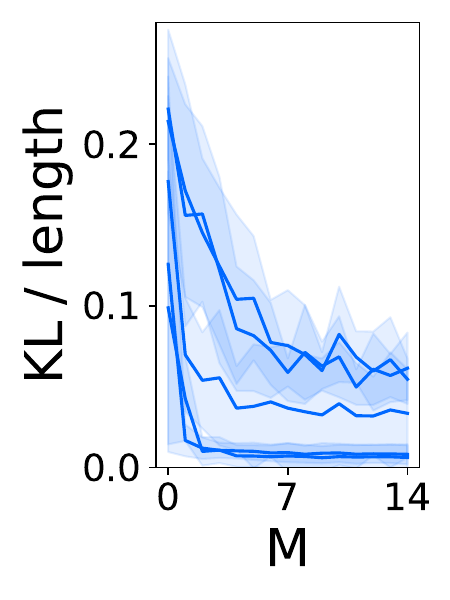}
        \vspace{-6mm}
        \captionsetup{labelformat=empty}
        \caption{\hspace{0.7cm}(a)}
        \label{fig: kl dec}
    \end{subfigure}
    \begin{subfigure}[b]{0.182\textwidth}
        \includegraphics[width=\textwidth]{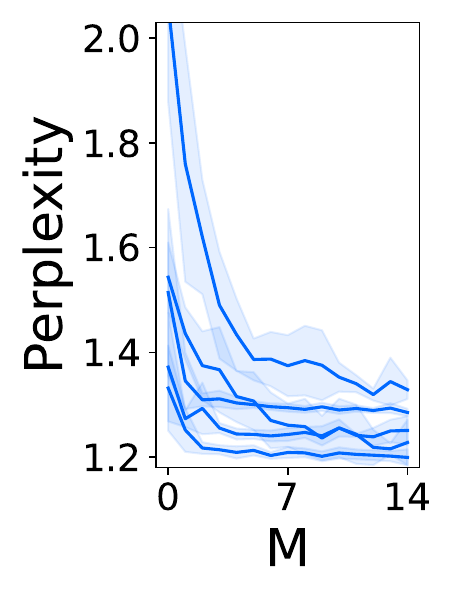}
        \vspace{-6mm}
        \captionsetup{labelformat=empty}
        \caption{\hspace{0.8cm}(b)}
        \label{fig: nll dec}
    \end{subfigure}
     \hspace{5mm}
    \begin{subfigure}[b]{0.3\textwidth}
        \raisebox{0.16\height}{\includegraphics[width=\textwidth]{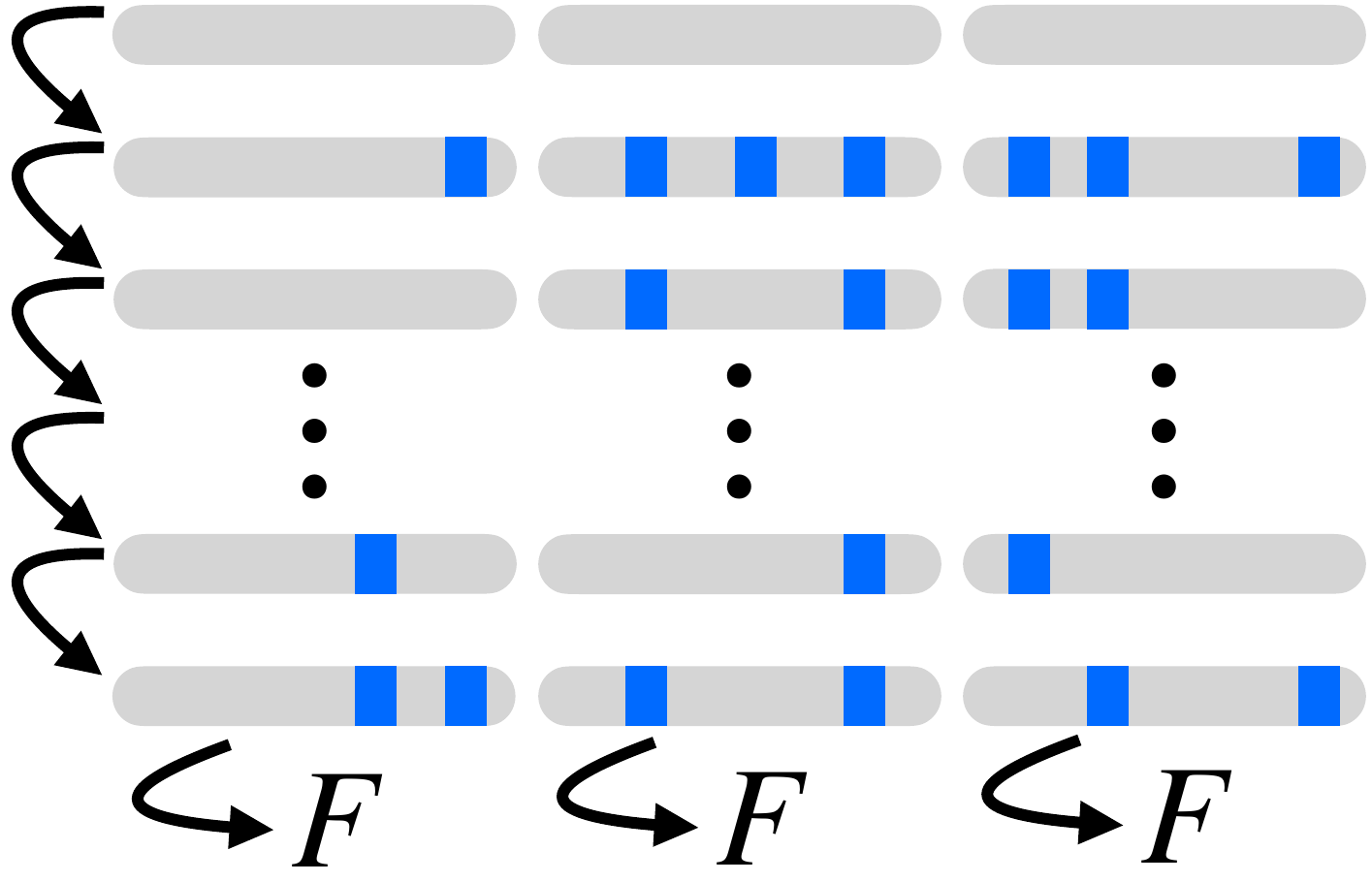}}
        \vspace{-6mm}
        \caption{}
        \label{fig: concept fitness}
    \end{subfigure}
    \hspace{5mm}
    \begin{subfigure}[b]{0.18\textwidth}
        \includegraphics[width=\textwidth]{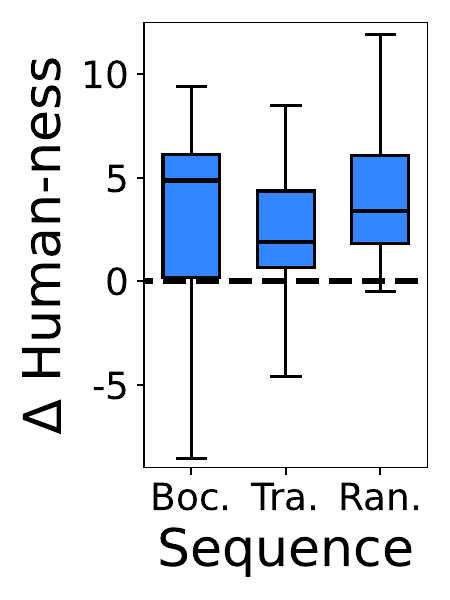}
        \vspace{-6mm}
        \captionsetup{labelformat=empty}
        \caption{\hspace{0.8cm}(d)}
        \label{fig: humanization}
    \end{subfigure}
    \caption{\textbf{CloneLM is a prior over fitness functions.} (a) For 5 different clonal families, with starting sequences $X_0$, $p_{\mathrm{CloneLM}}(X_{M+1}|X_{1:M}')$ gets close to $p_{\mathrm{CloneLM}}(X_{M_{\mathrm{large}}+1}|X_{1:M_\mathrm{large}})$ in KL.
    We shade one standard deviation across 10 samples of $X_{1:M_{\mathrm{large}}}, X'_{1:M}$.
    (b) For 5 different heavy chain clonal families, $p_\mathrm{CloneLM}(X|X_{0:M})$ better predicts sequences in a clonal family when conditioned on more sequences from that same clonal family $X_{0:M}$.
    We shade one standard deviation across 10 samples of $X_{1:M}$.
    (c) To sample from our prior $F\mid X_0$ we perform a martingale posterior procedure.
    (d) We evolve three antibody therapeutics with three mutations from 25 sampled fitness functions.
    These sequences evolve to look more like human antibodies.
    }
    \label{fig:fitness}
\end{figure}

\subsection{Approximately extracting a fitness landscape from a clonal family}\label{sec: martingale}

CloneLM does not explicitly represent the latent variable $\mathrm{clone}$ so we cannot exactly query the fitness function $F(X)=\log p(X|\mathrm{clone})$.
However, CloneLM approximates the predictive distribution of a Bayesian model $p(X_{M+1}|X_{0:M})$ which implicitly integrates over the latent $\mathrm{clone}$:
\[p(X_{M+1}|X_{0:M})=\int p(X_{M+1}|\mathrm{clone})dp(\mathrm{clone}|X_{0:M}).\]
As $M\to\infty,$ $p(\mathrm{clone}|X_{0:M})$ converges to a point mass at the latent $\mathrm{clone}^*$ generating the sequences.
Therefore, in theory, as $M$ becomes large, $\log p_{\mathrm{CloneLM}}(X_{M+1}|X_{0:M})$ should converge to $F$:
\[\log p_{\mathrm{CloneLM}}(X_{M+1}|X_{0:M})\approx\log p(X_{M+1}|X_{0:M})\approx \log p(X_{M+1}|\mathrm{clone}^*)=F(X_{M+1}).\]

We see that CloneLM can infer $F$ as such on real data as well --- as $M$ becomes large, the predictive of CloneLM, $p_{\mathrm{CloneLM}} (X_{M+1}|X_{1:M})$, approaches convergence and its limit increasingly approaches the distribution of sequences in a clonal family, $p(X|\mathrm{clone})$.
First in Fig.~\ref{fig: kl dec} we take random sequences $X_{0:M_{\mathrm{large}}}, X'_{0:M}$ from heavy chain clonal families and see if $p_\mathrm{CloneLM}(X_{M+1}|X_{0:M}')$ converges to $p_\mathrm{CloneLM}(X_{M_{\mathrm{large}}+1}|X_{0:M_{\mathrm{large}}})$ in Kullback-Leibler divergence. 
Setting $M_{\mathrm{large}}=24$, we see that
    although the divergence does not go to $0$, the distributions become very similar as $M$ becomes large.
Next in Fig.~\ref{fig: nll dec} we take random sequences $X_{0:M}$ from heavy chain clonal families and see if $p_\mathrm{CloneLM}(X|X_{0:M})$ approaches $p(X|\mathrm{clone})$.
We see indeed that when we use $p_\mathrm{CloneLM}(X|X_{0:M})$ to predict sequences in the clonal family, its perplexity is decreasing in $M$.

\subsection{Sampled fitness landscapes and evolving sequences}\label{sec: fitting llm}

Given that our model samples realistic clonal families and it can recover $F(X)=\log p(X|\mathrm{clone})$, we can approximately sample from $p_\mathrm{CloneLM}(F|X_0)$ using a martingale posterior procedure~\cite{Fong2024-bx} --- we sample from the prior of clonal families that contain $X_0$, $X_{1:M}\sim p_{\mathrm{CloneLM}}(X_{1:M})$, and then approximate the fitness function as $F(X)\approx \log p_{\mathrm{CloneLM}}(X|X_{0:M})$ (Fig.~\ref{fig: concept fitness}).

The fitness functions we sample reflect how our immune systems evolve antibodies.
In Fig.~\ref{fig: humanization} we take the heavy chains of three antibody therapeutics, bococizumab, trastuzumab, and ranibizimab, sample fitness functions setting from CloneLM conditional on these sequences with $M=10$, and iteratively evolve these sequences by adding the most likely mutation under each sampled fitness functions three times.
These therapeutics are not originally human sequences and therefore can be unstable in the human body --- in particular, bococizumab was discontinued due to harsh side effects.
If they were to evolve in our bodies, we would expect them to become more human-like, and therefore likely more stable.
Indeed we see that as we evolve these sequences they look more like human antibodies, where human-ness is measured by the likelihood of IgLM \cite{Shuai2022-gp}, a model trained on a large set of human antibodies.
\section{CloneBO: Inference with experimental measurements}\label{sec: clonebo}

We now describe how to use our prior over fitness functions $F$ to optimize sequences in the lab.
In Section~\ref{sec: lab prior} we build a prior for measurements in the lab $f$ using our prior for $F$.
We cannot exactly condition on the implicit $\mathrm{clone}$, so in Section~\ref{sec: approx post} we approximate the posterior over the latent $\mathrm{clone}$ with a posterior over concrete clonal families $X_{1:M}$.
In Section~\ref{sec: tsmc} we describe how to sample from our approximate posterior using a twisted sequential Monte Carlo (SMC) procedure.
Finally in Section~\ref{sec: thompson} we describe how to suggest sequences to perform iterative optimization with CloneLM; we call our method Clone-informed Bayesian Optimization (CloneBO).

\subsection{Building a prior on laboratory measurements}\label{sec: lab prior}
In our bodies, antibodies are optimized to stably and strongly bind their targets.
In principle, we are interested in doing same in the lab.
We therefore assume that the function we optimize in the lab, $f$, is approximately drawn from our prior on fitness functions, $F$, while allowing for some discrepancy due to mismatches between measurements in the lab and those in our bodies. For simplicity, we assume that $f$ is an affine linear transformation of $F$ and that the deviation between experiment and fitness is independent normal with error $\sigma^2$; that is, for some $T>0, C$, calling $F_n = \log p(\hat X_{n}|\mathrm{clone})$,
\[
    Y_n \mid \mathrm{clone}\sim N(T F_n + C, \sigma^2I).
\]

To reflect our vague uncertainty about $T$ and $C$ we use uniform priors: $C\sim \mathrm{Uniform}(-\infty, \infty)$ and $T\sim \mathrm{Uniform}(0, \infty)$.
With these priors, we get an analytical expression for the marginal likelihood.

\begin{proposition}\label{prop: marginal lik}
    (Proof in App.~\ref{app: marginal lik proof}.) For some constant $D$, and $R=\sqrt{N}\frac{\mathrm{Std}(Y_{1:N})}{\sigma}\mathrm{Cor}(F_{1:N}, Y_{1:N})$, with $\Phi$ as the Gaussian CDF,
    \begin{equation}\label{eqn: likelihood}
        \log p(Y_{1:N}|F_{1:N}) = -\frac 1 2 \log \mathrm{Cov}(F_{1:N})  + \frac 1 2 R^2 + \log\Phi(R)+D.
    \end{equation}
\end{proposition}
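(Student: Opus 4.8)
The plan is to compute the marginal likelihood $p(Y_{1:N}\mid F_{1:N}) = \int_0^\infty\!\int_{-\infty}^\infty p(Y_{1:N}\mid F_{1:N},T,C)\,dC\,dT$ directly, integrating out the two nuisance parameters one at a time against their improper uniform priors, and tracking only the dependence on $F_{1:N}$. Write $F_n = \log p(\hat X_n\mid\mathrm{clone})$ and $a_n = Y_n - T F_n$, so that $p(Y_{1:N}\mid F_{1:N},T,C) = (2\pi\sigma^2)^{-N/2}\exp\!\big(-\tfrac{1}{2\sigma^2}\sum_n(a_n-C)^2\big)$. First I would integrate out $C$ over $\mathbb{R}$: completing the square gives $\sum_n(a_n-C)^2 = N(C-\bar a)^2 + \sum_n(a_n-\bar a)^2$, so the $C$-integral contributes a factor $\sqrt{2\pi\sigma^2/N}$ and replaces the sum by the centered residual $\sum_n(a_n-\bar a)^2 = S_{YY} - 2T\,S_{FY} + T^2 S_{FF}$, where $S_{YY}=\sum_n(Y_n-\bar Y)^2$, $S_{FF}=\sum_n(F_n-\bar F)^2$, $S_{FY}=\sum_n(F_n-\bar F)(Y_n-\bar Y)$.

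Next I would integrate out $T$ over $(0,\infty)$. Completing the square in $T$ rewrites the quadratic as $S_{FF}\big(T-S_{FY}/S_{FF}\big)^2 + \big(S_{YY}-S_{FY}^2/S_{FF}\big)$; divided by $2\sigma^2$, the $T$-free piece equals $\tfrac{S_{YY}}{2\sigma^2}\big(1-\mathrm{Cor}(F_{1:N},Y_{1:N})^2\big)$. Using the elementary identity $\int_0^\infty\exp(-(T-\mu)^2/2\tau^2)\,dT = \sqrt{2\pi}\,\tau\,\Phi(\mu/\tau)$ with $\mu = S_{FY}/S_{FF}$ and $\tau^2 = \sigma^2/S_{FF}$, the $T$-integral contributes $\sqrt{2\pi\sigma^2/S_{FF}}\,\Phi(\mu/\tau)$. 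This half-line Gaussian integral is exactly where the non-Gaussian factor $\Phi$ enters the answer.

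The final step is bookkeeping to match the stated form. I would check that $\mu/\tau = S_{FY}/(\sigma\sqrt{S_{FF}})$ equals $R$: substituting $\mathrm{Std}(Y_{1:N})=\sqrt{S_{YY}/N}$ and $\mathrm{Cor}(F_{1:N},Y_{1:N})=S_{FY}/\sqrt{S_{FF}S_{YY}}$ into the definition of $R$ collapses it precisely to this expression, and likewise the exponent $\tfrac{S_{YY}}{2\sigma^2}\mathrm{Cor}(F_{1:N},Y_{1:N})^2 = \tfrac{S_{FY}^2}{2\sigma^2 S_{FF}} = \tfrac12 R^2$. Collecting: the prefactor $\sqrt{2\pi\sigma^2/S_{FF}}$ yields $-\tfrac12\log S_{FF} = -\tfrac12\log\mathrm{Cov}(F_{1:N})$ up to an additive constant (since $S_{FF}$ and $\mathrm{Cov}(F_{1:N})$ differ by a factor of $N$ or $N-1$); $\Phi(\mu/\tau)$ yields $\log\Phi(R)$; and everything not depending on $F_{1:N}$ — the $(2\pi\sigma^2)^{-N/2}$, the $\sqrt{2\pi\sigma^2/N}$ from the $C$-integral, the $\log N$ from normalizing the covariance, and the $-S_{YY}/2\sigma^2$ term — is folded into $D$.

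The computation itself is routine; the main point requiring care is what "constant $D$" means — it must be read as constant \emph{in $F_{1:N}$} only, still depending on $Y_{1:N}$, $\sigma$ and $N$ — which is exactly what is needed, since the proposition is used downstream only to reweight candidate clonal families. One also tacitly needs $S_{FF}>0$ (the $F_n$ not all equal) for the $T$-integral to converge, and the algebra of the last paragraph must be arranged so that the single ratio $\mu/\tau$ appears both inside $\Phi$ and, squared, in the exponent, since that coupling is what makes the closed form come out correctly.
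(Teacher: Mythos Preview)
Your proposal is correct and shares the same two-step skeleton as the paper: marginalize the intercept to reduce everything to centered quantities, then integrate the positive slope over $(0,\infty)$ via a half-line Gaussian integral, which is where the $\Phi(R)$ factor and the $\tfrac12 R^2$ exponent appear together. The paper organizes the first step differently: rather than integrating $C$ directly against the improper flat prior, it places a proper prior $N(0,\tau^2)$ on the intercept, marginalizes to get $Y_{1:N}\sim N(\beta F_{1:N},\,\Sigma)$ with $\Sigma=\sigma^2 I + N\tau^2 e\otimes e$, performs the $\beta$-integral in matrix form via $\sigma_\beta^2=(F^T\Sigma^{-1}F)^{-1}$ and $\mu_\beta=\sigma_\beta^2 F^T\Sigma^{-1}Y$, and only at the end sends $\tau\to\infty$, whereupon $\Sigma^{-1}\to\sigma^{-2}(I-e\otimes e)$ recovers the centering projector and hence your $S_{FF}$, $S_{FY}$. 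Your route through scalar centered sums is more elementary and reaches the identical expression with less machinery; the paper's limiting-proper-prior device is the standard way to make an improper-prior marginalization unambiguous, but for this purely Gaussian calculation it adds no content beyond what your direct computation already delivers.
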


The first term in Eqn.~\ref{eqn: likelihood} pushes the fitness values of the measured sequences, $F_{1:N}$, to be different, while the later terms push $F_{1:N}$ to strongly and positively correlate with $Y_{1:N}$.

As before, we assume we have a starting candidate $X_0$ that belongs in the clonal family.
Therefore,
\[p(\mathrm{clone}|Y_{1:N}, \hat X_{1:N}, X_0)\propto p(\mathrm{clone})p(X_0|\mathrm{clone})p\left(Y_{1:N}|F_{1:N}\right)\]

\subsection{Approximating the posterior}\label{sec: approx post}
To infer $f$ given measurements $Y_{1:N}, \hat X_{1:N}$, we would like to sample from the posterior $F\sim p(F|Y_{1:N}, \hat X_{1:N}, X_0)$. As we only implicitly represent $\mathrm{clone}$, we approximate the posterior by swapping the latent $\mathrm{clone}$ for a concrete clonal family, $X_{1:M}$; then, as in Sec.~\ref{sec: martingale}, we can approximately query the fitness function $F(X)\approx \log p(X|X_{0:M})$.

To create an approximate posterior, we replace $F_{n}$ with $F^M_n=\log p(\hat X_n|X_{0:M})$:
\begin{equation}\label{eq: approx post}
\tilde p_M(X_{1:M}|Y_{1:N}, \hat X_{1:N}, X_0)\propto p(X_{1:M}|X_0)p\left(Y_{1:N}|F^M_{1:N}\right).
\end{equation}
As $M\to\infty$, $F_{1:N}^M\to F_{1:N}$, so, $\tilde p_M$ converges to the distribution one obtains by sampling $\mathrm{clone}$ from the posterior and sampling $X_m\sim p(X_m|\mathrm{clone})$ iid:
\begin{proposition}\label{prop: post converge}
    (Proof in App.~\ref{app: approx post proof}) As $M\to\infty$, $\tilde p_M$ converges to the true posterior of $X_{0:M}$.
\end{proposition}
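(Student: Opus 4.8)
The plan is to compare the finite-$M$ approximate posterior $\tilde p_M(X_{1:M}\mid Y_{1:N}, \hat X_{1:N}, X_0)$ of Eqn.~\ref{eq: approx post} against the law one gets by the two-stage sampling procedure: draw $\mathrm{clone}\sim p(\mathrm{clone}\mid Y_{1:N}, \hat X_{1:N}, X_0)$ and then draw $X_m\sim p(X_m\mid\mathrm{clone})$ iid. Call this target law $p^\star(X_{1:M})$. Since both measures live on the same space of $M$-tuples of sequences (a finite, hence compact, discrete space for each $M$), it suffices to show that the unnormalized densities converge pointwise and that the normalizing constants converge; then Scheffé / bounded convergence gives convergence in total variation. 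Concretely, $\tilde p_M \propto p(X_{1:M}\mid X_0)\, p(Y_{1:N}\mid F^M_{1:N})$ while $p^\star \propto p(X_{1:M}\mid X_0)\, p(Y_{1:N}\mid F_{1:N})$, where I have used $p(X_{1:M}\mid X_0) = \int \prod_m p(X_m\mid\mathrm{clone})\, dp(\mathrm{clone}\mid X_0)$ from the De Finetti representation of Section~\ref{sec: de finetti}, together with the exchangeability that makes the prior factor common to both. So the whole claim reduces to showing $p(Y_{1:N}\mid F^M_{1:N}) \to p(Y_{1:N}\mid F_{1:N})$ in the appropriate sense.

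The key step is the convergence $F^M_n = \log p(\hat X_n\mid X_{0:M}) \to F_n = \log p(\hat X_n\mid\mathrm{clone}^\star)$, which is exactly the martingale-posterior consistency discussed in Section~\ref{sec: martingale}: as $M\to\infty$ the posterior $p(\mathrm{clone}\mid X_{0:M})$ concentrates at the clone $\mathrm{clone}^\star$ that generated $X_{0:M}$, so the Bayesian predictive $p(\hat X_n\mid X_{0:M})$ converges to $p(\hat X_n\mid\mathrm{clone}^\star)$, almost surely under $p^\star$. I would invoke the Doob consistency theorem (or the martingale convergence of $p(\mathrm{clone}\mid X_{0:M})$, since the posterior mean of any bounded functional of $\mathrm{clone}$ is a bounded martingale) to get this. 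Once $F^M_{1:N}\to F_{1:N}$ a.s., the marginal likelihood given in Proposition~\ref{prop: marginal lik}, $\log p(Y_{1:N}\mid F_{1:N}) = -\tfrac12\log\mathrm{Cov}(F_{1:N}) + \tfrac12 R^2 + \log\Phi(R) + D$, is a continuous function of $F_{1:N}$ wherever $\mathrm{Cov}(F_{1:N})>0$, so $p(Y_{1:N}\mid F^M_{1:N})\to p(Y_{1:N}\mid F_{1:N})$ a.s. Combined with a uniform integrability / boundedness argument (the sequence space is discrete and the likelihood factor is bounded away from $0$ and $\infty$ on the relevant support), this upgrades to convergence of the normalizing constants $Z_M = \sum_{X_{1:M}} p(X_{1:M}\mid X_0)\,p(Y_{1:N}\mid F^M_{1:N}) \to Z_\infty$, and then $\tilde p_M \to p^\star$ in total variation.

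The main obstacle I anticipate is the edge case $\mathrm{Cov}(F_{1:N})\to 0$, i.e.\ when the measured sequences have (nearly) equal fitness under $\mathrm{clone}^\star$: there the $-\tfrac12\log\mathrm{Cov}(F_{1:N})$ term blows up and the continuity argument for the likelihood fails. I would handle this by noting that it is a measure-zero event for generic data (or by arguing it contributes negligibly to $Z_M$), or alternatively by absorbing the degenerate case into the claim's implicit assumption that the $\hat X_n$ are distinct and the problem is nondegenerate. A secondary technical point is making the phrase ``converges to the true posterior of $X_{0:M}$'' precise — presumably meaning that for each fixed $k$ the marginal law of $X_{0:k}$ under $\tilde p_M$ converges (as $M\to\infty$) to its law under the two-stage procedure, which follows from the total-variation convergence above by marginalization. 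I would state the mode of convergence (total variation of finite-dimensional marginals, a.s.\ in the data) explicitly at the outset to keep the argument clean.
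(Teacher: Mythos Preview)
Your approach is essentially the paper's: invoke Doob's consistency theorem to get $F^M_{1:N}\to F_{1:N}$ almost surely under the prior on $\mathrm{clone}$, use boundedness of the marginal likelihood to upgrade this to $L^1$ convergence, deduce convergence (and positivity) of the normalizing constants, and then conclude total-variation convergence. The paper handles precisely the edge case you anticipated by \emph{assuming} $\mathrm{Cov}(F_{1:N})>\epsilon$ almost surely under $p(\mathrm{clone}\mid X_0)$, together with $p(X_0,\hat X_{1:N})\neq 0$ to guarantee the limiting normalizer is strictly positive.

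One slip to correct: you write $p^\star \propto p(X_{1:M}\mid X_0)\, p(Y_{1:N}\mid F_{1:N})$, but $F_{1:N}$ is a function of the latent $\mathrm{clone}$, not of $X_{1:M}$, so the likelihood factor cannot be pulled outside the De Finetti integral. The correct target is
\[
p^\star(X_{1:M}) \;\propto\; \int \prod_{m=1}^M p(X_m\mid\mathrm{clone})\, p(Y_{1:N}\mid F_{1:N})\, dp(\mathrm{clone}\mid X_0),
\]
and the final TV bound therefore requires a triangle-inequality step that keeps the integral over $\mathrm{clone}$ explicit, comparing $p(Y_{1:N}\mid F^M_{1:N})$ to $p(Y_{1:N}\mid F_{1:N})$ under the joint law of $(X_{1:M},\mathrm{clone})$ rather than pointwise in $X_{1:M}$. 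This does not change your strategy, only how the last inequality is written.
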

In practice, we approximate $p(X_{1:M}|X_0)$ and $F_{1:N}^M$ with CloneLM. 

\subsection{Sampling from the approximate posterior with twisted SMC}\label{sec: tsmc}

\begin{figure}
    \centering
    \begin{subfigure}[b]{0.3\textwidth}
        \includegraphics[width=\textwidth]{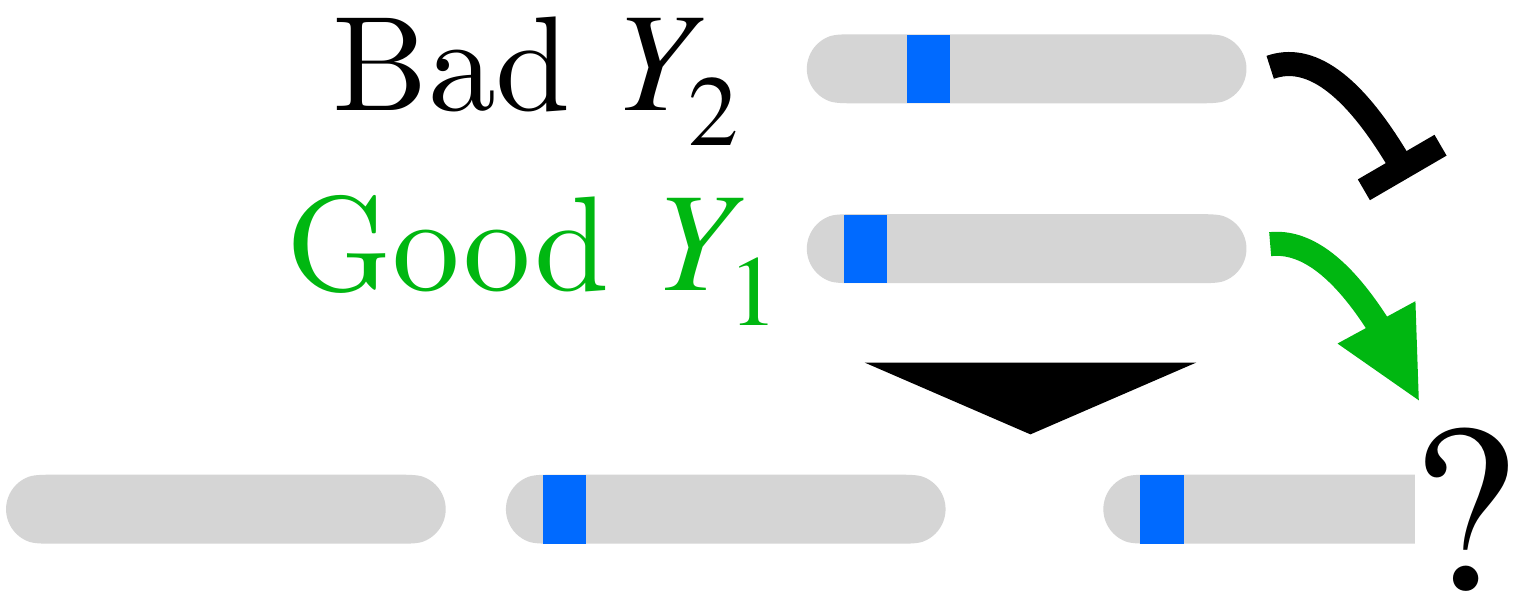}
        \vspace{0.4cm}
        \caption{}
        \label{fig: concept twisted}
    \end{subfigure}
    \begin{subfigure}[b]{0.365\textwidth}
        \includegraphics[width=\textwidth]{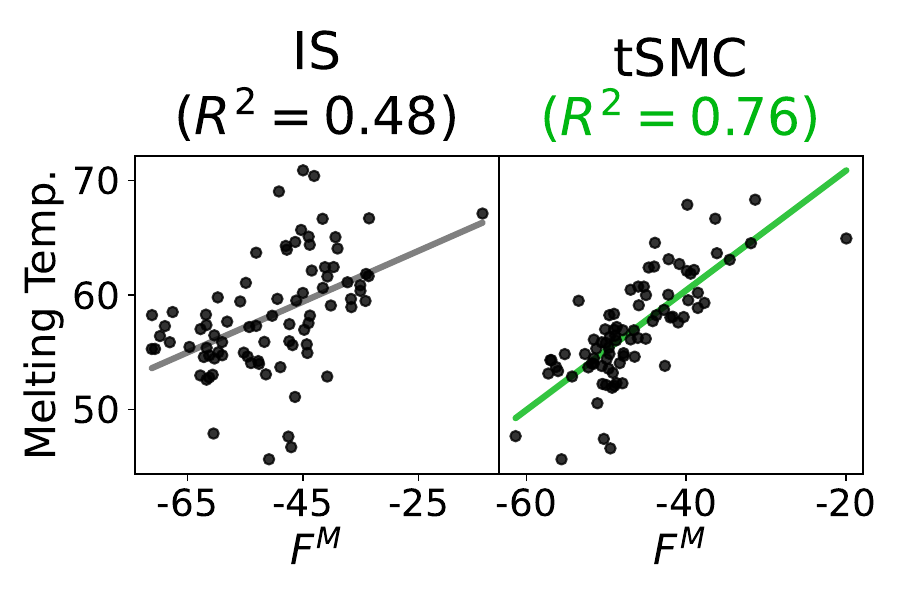}
        \caption{}
        \label{fig: tm fitness fit}
    \end{subfigure}
    \begin{subfigure}[b]{0.305\textwidth}
        \includegraphics[width=\textwidth]{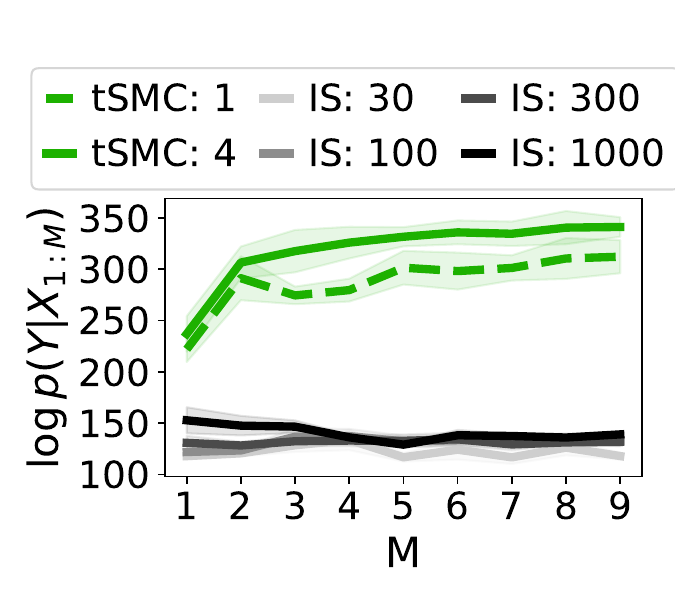}
        \captionsetup{labelformat=empty}
        \caption{\hspace{-0.8cm}(c)}
        \label{fig: tsmc}
    \end{subfigure}
    \caption{\textbf{We accurately sample functions from the posterior with a twisted SMC procedure.} a) To sample from our posterior, we bias our generated sequences to look more like those sequences that were measured in the lab to be good.
    b) A sample from tSMC better fits the data than an importance sample.
    We show a line of best fit between $F_{1:N}^M$ (fitness from a clone of $M$ sequences) and $Y_{1:N}$ (measurement) for example clonal families sampled by importance sampling or twisted SMC, with $M=6$, $D=300$ particles for IS, and $D=4$ for twisted SMC.
    c) We quantify the result from (b) across 10 replicates for various clone sizes $M$.
    }
    \label{fig:tsmc}
\end{figure}

To sample from $\tilde p_M$, we
need to generate $M$ sequences from CloneLM such that the probabilities of the $(M+1)$st sequence matches experimental measurements.
Naively, we might sample $X_{1:M}$ from CloneLM and then importance sample.
However, the space of sequences is large, and we may fail to resample a mutation that improved measurements in the lab.

Instead we bias generation at each letter by adding measured sequences to our clonal family so that mutations that improve measurements are encouraged and those that harm measurements are avoided (Fig.~\ref{fig: concept twisted}).
We call this bias a ``twisted'' distribution, a term from the sequential Monte Carlo literature. In practice, this bias does not exactly sample from the posterior, so we efficiently correct for the discrepancy with a sequential Monte Carlo procedure.

\paragraph{Twisted distributions}
Define $X^{(l)}$ to be the letter at the $l$th position of a sequence $X$ and $X^{(:l)}$ as the first $l$ letters in $X$; if $l$ is greater than the length of $X$, we define $X^{(l)}$ as an empty position.
To build our twisted distributions, we decompose the likelihood of $\hat X_n$ given $M+1$ sequences into contributions from the first $M$ sequences and contributions from each letter in sequence $M+1$:
\begin{equation}\label{eqn: f contrib}
    \begin{aligned}
        F^{M+1}_n=\log p(\hat X_n|X_{0:M+1}) = & \log \frac{p( X_{M+1}|X_{0:M}, \hat X_n)}{p( X_{M+1}|X_{0:M})}+\log p(\hat X_n|X_{0:M})\\
        = &\sum_{l}\log\frac{p( X_{M+1}^{(l)}|X_{0:M}, X_{M+1}^{(:l-1)}, \hat X_n)}{p( X_{M+1}^{(l)}|X_{0:M}, X_{M+1}^{(:l-1)})} +F^{M}_n\\
        =: &\sum_{l}F_n^{M+1, (l)} +F^{M}_n.
    \end{aligned}
\end{equation}
We can calculate $F_n^{M+1, (l)}$ by adding $\hat X_n$ to the end of the clonal family $X_{1:M}$ and calculating how much the conditional likelihood of the $l$-th letter of $X_{M+1}$ increases. To approximately sample from the posterior therefore, when sampling each letter we bias towards letters with $F_n^{M+1, (l)}$ that better match experimental measurements (Fig.~\ref{fig: concept twisted});
we call these approximations of the posterior the ``twisting distributions'' and we define them formally in App.~\ref{app:sequential_is}.

The twisted distributions are not the exact marginals.
We correct for this discrepancy by sampling $D>1$ sequences at a time and importance sampling in a sequential Monte Carlo procedure (SMC), which we also describe in App.~\ref{app:sequential_is}.
The final method is known as twisted SMC, and when $D=1$, it is equivalent to sampling from the approximations described above.

\paragraph{Empirical results}
We sample a clonal family conditional on laboratory measurements of the melting temperature of 75 related antibodies from an experiment described in Sec.~\ref{sec: results}.
In Fig.~\ref{fig: tm fitness fit} and \ref{fig: tsmc} we see that clonal families from twisted SMC fit the experimental data substantially better than clonal families importance sampled from unconditional samples from CloneLM.
We also see in Fig.~\ref{fig: tsmc} that correcting for bias with $D=4$ also improves the fit to the data and that as $M$ increases, the likelihood $p(Y_{1:N}|F_{1:N}^M)$ plateaus, reflecting the convergence of $\tilde p_M$ to the true posterior.
We show similar results for laboratory measurements of binding in App.~~\ref{app:tsmc affinity res}

\subsection{Bayesian optimization with CloneBO}\label{sec: thompson}

After sampling $F$ from the posterior we would like to suggest sequences to test in lab.
We take a Thompson sampling approach \citep{Russo2018-vr}: we propose testing the sequence predicted to maximize $F(X)$, and therefore maximize $f(X)$, in the lab.
We cannot optimize $F(X)$ over all sequences, so in practice we start with $4$ sequences with the highest measurements $Y$ and iteratively optimize $F(X)$ over the top substitution for up to $3$ substitutions.

In theory, $X_0$ represents a candidate sequence to optimize.
In practice, we found it helpful to take a greedy approach --- we randomly select $X_0$ from the $4$ sequences with the highest measurements $Y$.

Conditioning on a large number of measurements $\hat X_{1:N}, Y_{1:N}$ is computationally expensive.
To accommodate large $N$, other Bayesian optimization methods build summaries of the measurements, for example by fitting a neural network to them \cite{Stanton2022-ru}.
In our case, we only condition on the measurements of sequences predicted to be most informative -- we calculate the probability that each $\hat X_n$ appears in a clonal family with $X_0$, $p(X_0, \hat X_n)$ and condition on the measurements of the $75$ most likely sequences.
Additional details of CloneBO are provided in App.~\ref{app: clonebo details}.

\section{Experiments}\label{sec: results}

Now we demonstrate that CloneBO efficiently optimizes antibody sequences against oracles trained on real stability and affinity data.
In Section ~\ref{sec: in silico} we demonstrate that our method efficiently optimizes fitness functions or laboratory measurements \textit{in silico}.
In Section~\ref{sec: lab results} we also show that our method suggests mutations that optimize sequences in \textit{in vitro} experiments the lab.
We provide details of our experiments in App.~\ref{app: exp details}.

\subsection{Optimizing antibodies \textit{in silico}}\label{sec: in silico}

We show that CloneBO efficiently optimizes fitness functions or measurements in the lab \textit{in silico}.
To simulate fitness functions or lab measurements, we train oracles $f$ on real data.
We show that CloneBO outperforms naive and informed baselines.

\subsubsection{Baselines}

First we consider a naive \textbf{Greedy} baseline which suggests a random substitution of one of the top 4 sequences.
We also compare to an informed greedy baseline which randomly picks one of the top 4 sequences and introduces the mutation predicted to make the antibody look most like a typical antibody, where typicality in measured by the likelihood of a masked language model trained on antibody sequences, \textbf{Sapiens} \citep{Prihoda2022-zy};
    this is a popular strategy for making antibodies that are more stable in the body.
We also compare to \textbf{LaMBO} \citep{Stanton2022-ru}, a state-of-the-art Bayesian optimization method for sequences which builds a latent space using a pool of sequences and fits experimental measurements in this latent space; by conditioning on this latent space, it is less likely to suggest atypical sequences.
We also build a LaMBO model informed by the space of antibodies by pretraining its latent space using 100000 antibody sequences from the observed antibody space database~\citep{Olsen2022-nt}, \textbf{LaMBO-Ab}. 
We also compare to other popular strategies for iterative optimization of sequences that do not have antibody-based priors, \textbf{Genetic}, \textbf{AdaLead}~\citep{sinai2020adalead}, and \textbf{EvoBO}~\citep{sinai2020adalead}; \textbf{CMA-ES}~\citep{Hansen2001-dn}, \textbf{Dyna-PPO}~\citep{Angermueller2020-pa}, and \textbf{CbAS}~\citep{Brookes2019-fv}.
In total, we compare CloneBO to 10 baselines that represent state-of-the-art industry practice\footnote{Note that some of these methods are developed for different regimens, such as short sequences or large amounts of training data. 
Their performance here does not necessarily represent their performance for the problems they are optimized for.}.

\subsubsection{Results} 
\begin{figure}

    \centering
    \begin{subfigure}[b]{0.23\textwidth}
        \includegraphics[width=\textwidth]{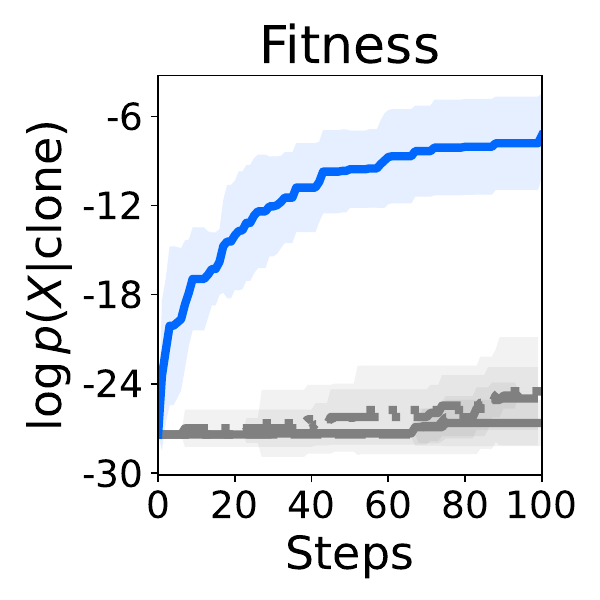}
        \captionsetup{labelformat=empty}
        \caption{\hspace{0.5cm}(a)}
        \label{fig: clone opt}
    \end{subfigure}
    \begin{subfigure}[b]{0.76\textwidth}
        \centering
        \includegraphics[width=\textwidth]{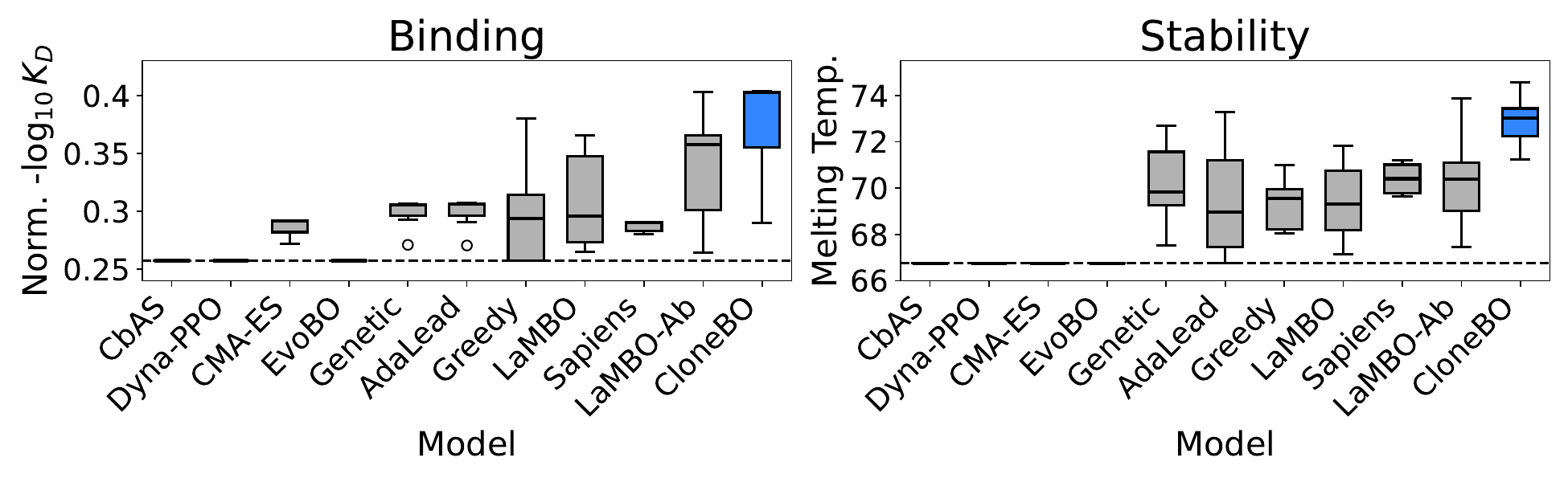}
        \captionsetup{labelformat=empty}
        \vspace{-0.765cm}
        \caption{(b)}
        \label{fig: oracle opt}
    \end{subfigure}
    \caption{\textbf{CloneBO efficiently optimizes antibodies \textit{in silico}.}
    We show the mean and standard deviation of the best acheived value across 10 replicates.
    (a) CloneBO efficiently optimizes a fitness function. The blue line is CloneBO; the grey are LaMBO-Ab, LaMBO, Sapiens, and Greedy.
    (b) CloneBO optimizes binding and stability \textit{in silico} over 100 steps of iterative design (p value is Mann-Whitney).
    It does significantly better than the next best method for binding (p=0.018 Mann-Whitney) and stability (p=0.006 Mann-Whitney).
    } 
    \label{fig: in silico results}
\end{figure}
\paragraph{Oracle of a fitness function of a clonal family}
We first demonstrate the potential of the CloneBO prior to accelerate optimization.
We build an oracle to simulate the fitness function of a real human clonal family, that is, a function from the CloneBO prior.
We trained a language model on a heavy chain clonal family of $10015$ sequences from our test set and try to maximize $f(X)=$ the log likelihood of $X$ of this model.
We start with a single measurement $\hat X_1, Y_1$ where $\hat X_1$ is a sequence from the clonal family.
Very few mutations of an antibody improve fitness, so in Fig.~\ref{fig: clone opt} we see some baselines struggle to identify positive mutations.
CloneBO's prior on the other hand gives it knowledge about what sorts of mutations are most likely to improve fitness allowing it to quickly optimize $f$ even at very low $N$.

\paragraph{Oracles from laboratory measurements of melting temperature and binding}
We next demonstrate the utility of CloneBO in an \textit{in silico} simulation of a realistic in-lab setting.
We trained oracles on lab measurements from an experiment that aimed to optimize a VHH domain, a sequence similar to an antibody heavy chain, for binding and stability at high temperature.
We trained neural network ensembles on the melting temperature and binding (measured in $-\log K_D$) measurements and try to maximize $f(X)=$ the mean predictions of these ensembles.
We simulate starting part way through this experiment by starting with 1000 measurements $\hat X_{1:1000}, Y_{1:1000}$ where $\hat X_{1:1000}$ are the first $1000$ sequences measured in the experiment and $ Y_{1:1000}$ are oracle predictions.
We do not expect mutations outside of the CDR regions of an antibody to substantially affect binding, so we only allow mutations in these regions of the sequence when optimizing for binding.
In Fig.~\ref{fig: oracle opt}, we see that after 100 steps, greedy methods and methods informed by an antibody prior optimize antibodies more efficiently than previous methods against these oracles; in particular, CloneBO outperforms all baselines.
In App.~\ref{sec: against N} we plot these results against $N$ and in table form.

\paragraph{Comparison to structure-based design model for binding SARS CoV.}
In App.~\ref{sec: cov binding} we show that CloneBO can also efficiently optimize antibodies \textit{in silico} for SARS CoV binding as measured by a predictor trained on CoVAbDaB~\citep{Raybould2020-us}.
In particular we see CloneBO beats structure-based design method DiffAb~\citep{Luo2022-ha}.

\paragraph{Ablations and sensitivity}
In App.~\ref{sec: ablations} we show that CloneBO accelerates optimization by building an accurate posterior -- the performance of CloneBO is harmed when we ablate sampling large clonal families, conditioning on experimental data, or our twisted SMC sampling strategy.
We also perform two other ablations demonstrating that our results above are reliable -- we show 1) CloneBO is robust to different starting sequences and starting pool sizes, and 2) CloneBO can efficiently optimize antibodies when $f$ deviates from the CloneBO prior, especially at low $N$.
Finally we sweep hyperparameters to show that CloneBO is not particularly sensitive to hyperparameters other than the amount of noise in the data;
we describe a heuristic that allows us to make a good choice for the amount of noise.

\subsection{Optimizing an antibody \textit{in vitro}}\label{sec: lab results}

\begin{figure}
    \centering
    \begin{subfigure}[b]{0.33\textwidth}
        \includegraphics[width=\linewidth]{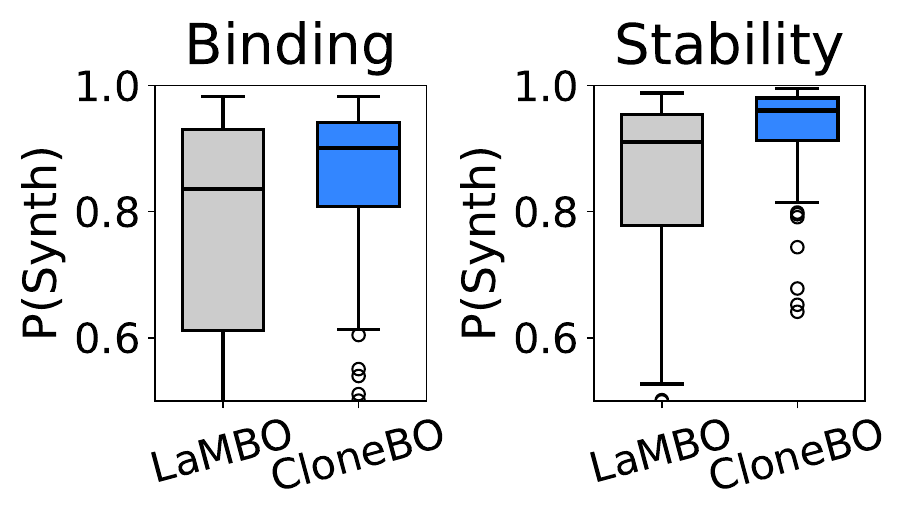}
        \caption{}  
        \label{fig:synth pred}
    \end{subfigure}
    \begin{subfigure}[b]{0.66\textwidth}
        \includegraphics[width=0.495\textwidth]{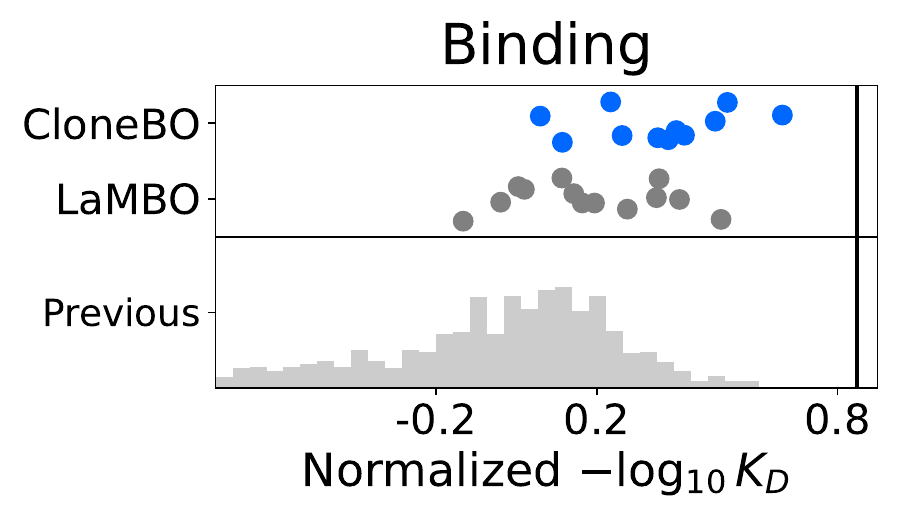}
        \includegraphics[width=0.495\textwidth]{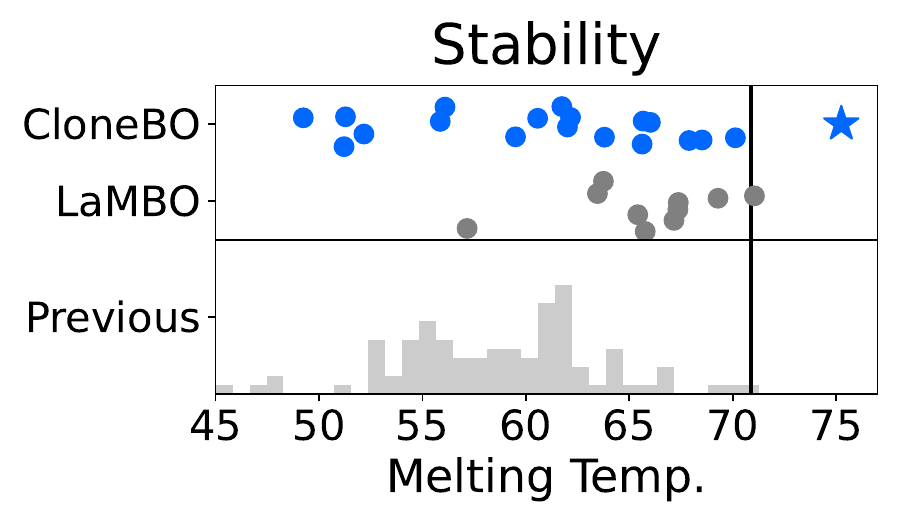}
        \caption{}
        \label{fig: lab opt}
    \end{subfigure}
    \caption{\textbf{CloneBO efficiently optimizes antibodies \textit{in vitro}.}
    \emph{LaMBO} in this plot refers to \emph{LaMBO-Ab}.
    (a) CloneBO design sequences predicted to be synthesizable.
    (b) CloneBO designs strong and stable binders in the lab. 
    Measurements from previous rounds are shown in a histogram.
    The vertical black line represents the best value previously achieved.
    } 
    \label{fig: in vitro results}
\end{figure}
We now demonstrate that CloneBO can design sequences as part of a real-world antibody optimization campaign.
We started with 1000 lab measurements of binding and melting temperature\footnote{94 / 1000 sequences were measured for melting temperature and 997 / 1000 were measured for $K_D$.} (visualized in Fig.~\ref{fig:visualize train}) and designed 200 sequences using CloneBO and our strongest baseline, LaMBO-Ab, for one wetlab iteration of optimizing binding or melting temperature.
In a real world optimization campaign, this one step would be repeated many times.

Before measuring designed sequences, sequences need to be synthesized;
    sequences which are atypical can fail to synthesize, making their measurement impossible.
In Fig.~\ref{fig:synth pred} we plot the predicted synthesizability of sequences from CloneBO and LaMBO-Ab;
sequences from CloneBO are significantly more synthesizable (Mann-Whitney $p<1e-5$), suggesting they are more realistic.
We next measure 20 sequences designed by CloneBO and LaMBO-Ab that are predicted to synthesize.

In Fig.~\ref{fig: lab opt} we plot sequences we were able to measure; 
we include any sequences that were proposed that we had measured in previous experiments. 
We see that sequences from CloneBO achieve the best binding and stability.
Our strongest binder is only beaten by 2 / 997 previously measured sequences and our most stable sequence beats the previously measured sequences by a large margin. 
We also conclude that sequences from CloneBO are significantly stronger binders than those from LamBO-Ab (Mann-Whitney $p=0.021$). We discuss these results in more detail in App.~\ref{sec: additional in vitro}.

\section{Conclusion}\label{sec: conclusion}

To develop new disease treatments, antibodies must be optimized for a range of properties.
By learning from the human immune system's approach to antibody maturation and therefore substantially accelerating optimization in the lab, CloneBO can help build safer and more effective therapeutics.

An important direction of future work is addressing theoretical and practical limitations of CloneBO.
First, CloneBO currently assumes a simple relationship between the fitness of a clonal family and measurements in the lab (Section~\ref{sec: lab prior}).
Future work may account for heteroskedasticity or nonlinear relationships.
As well, CloneBO evaluates the fitness of a sequence by assessing how likely it is to belong to a clonal family of $X_0$.
Future work may attempt to incorporate patterns learned from measurements of diverse sequences which are unlikely to belong to the same clonal family.
Finally, the cost of sampling from the CloneBO posterior scales with the number of laboratory measurements $N$ (Section~\ref{sec: tsmc}), so CloneBO scales by conditioning only a subset of measured sequences.
Future work could instead build a more scalable model or approximate sampling procedure.

Another important future direction is extending CloneBO to multi-objective iterative Bayesian optimization of antibodies for binding and stability simultaneously.
We can do so for example by swapping Thompson sampling for other acquisition functions \citep{Daulton2020-nb}.

\section{Reproducibility Statement}
We include weights for our CloneBO models and code for implementing sampling in our code release.
We describe how to train a CloneBO model, including the parameters that were used to build the training data, in the appendix.
Sequences from the iterative optimization experiment are proprietary; we release all other data: 
    we include the trained oracle for Fig.~\ref{fig: clone opt} in our code release;
    we describe how implement baselines and the oracle in Fig.~\ref{fig:cov validation} in the appendix.

\subsection*{Acknowledgements}
This work is supported by NSF CAREER IIS-2145492, NSF CDS\&E-MSS 2134216, NSF HDR-
2118310, Capital One, and an Amazon Research Award.

\bibliographystyle{abbrvnat}
\bibliography{refs}

\begin{thebibliography}{52}
\providecommand{\natexlab}[1]{#1}
\providecommand{\url}[1]{\texttt{#1}}
\expandafter\ifx\csname urlstyle\endcsname\relax
  \providecommand{\doi}[1]{doi: #1}\else
  \providecommand{\doi}{doi: \begingroup \urlstyle{rm}\Url}\fi

\bibitem[Angermueller et~al.(2020)Angermueller, Dohan, Belanger, Deshpande,
  Murphy, and Colwell]{Angermueller2020-pa}
C.~Angermueller, D.~Dohan, D.~Belanger, R.~Deshpande, K.~Murphy, and L.~J.
  Colwell.
\newblock Model-based reinforcement learning for biological sequence design.
\newblock \emph{Int Conf Learn Represent}, Apr. 2020.

\bibitem[Brookes et~al.(2019)Brookes, Park, and Listgarten]{Brookes2019-fv}
D.~Brookes, H.~Park, and J.~Listgarten.
\newblock Conditioning by adaptive sampling for robust design.
\newblock In K.~Chaudhuri and R.~Salakhutdinov, editors, \emph{Proceedings of
  the 36th International Conference on Machine Learning}, volume~97 of
  \emph{Proceedings of Machine Learning Research}, pages 773--782. PMLR, 2019.

\bibitem[Burnett et~al.(2018)Burnett, Langley, Schofield, Hermes, Chan,
  Jackson, Bourne, Reed, Patterson, Porebski, Brink, Christ, and
  Goodnow]{Burnett2018-fo}
D.~L. Burnett, D.~B. Langley, P.~Schofield, J.~R. Hermes, T.~D. Chan,
  J.~Jackson, K.~Bourne, J.~H. Reed, K.~Patterson, B.~T. Porebski, R.~Brink,
  D.~Christ, and C.~C. Goodnow.
\newblock Germinal center antibody mutation trajectories are determined by
  rapid self/foreign discrimination.
\newblock \emph{Science}, 360\penalty0 (6385):\penalty0 223--226, Apr. 2018.

\bibitem[Carter and Lazar(2018)]{carter2018next}
P.~J. Carter and G.~A. Lazar.
\newblock Next generation antibody drugs: pursuit of the'high-hanging fruit'.
\newblock \emph{Nature Reviews Drug Discovery}, 17\penalty0 (3):\penalty0
  197--223, 2018.

\bibitem[Daulton et~al.(2020)Daulton, Balandat, and Bakshy]{Daulton2020-nb}
S.~Daulton, M.~Balandat, and E.~Bakshy.
\newblock Differentiable expected hypervolume improvement for parallel
  multi-objective bayesian optimization.
\newblock In \emph{Proceedings of the 34th International Conference on Neural
  Information Processing Systems}, number Article 826 in NIPS '20, pages
  9851--9864, Red Hook, NY, USA, Dec. 2020. Curran Associates Inc.

\bibitem[Dopp and Reuel(2020)]{dopp2020simple}
J.~L. Dopp and N.~F. Reuel.
\newblock Simple, functional, inexpensive cell extract for in vitro prototyping
  of proteins with disulfide bonds.
\newblock \emph{Biochemical Engineering Journal}, 164:\penalty0 107790, 2020.

\bibitem[Falck et~al.(2024)Falck, Wang, and Holmes]{Falck2024-fm}
F.~Falck, Z.~Wang, and C.~C. Holmes.
\newblock Are large language models bayesian? a martingale perspective on
  {In-Context} learning.
\newblock In \emph{{ICLR} 2024 Workshop on Secure and Trustworthy Large
  Language Models}, Apr. 2024.

\bibitem[Fannjiang et~al.(2022)Fannjiang, Bates, Angelopoulos, Listgarten, and
  Jordan]{Fannjiang2022-jr}
C.~Fannjiang, S.~Bates, A.~N. Angelopoulos, J.~Listgarten, and M.~I. Jordan.
\newblock Conformal prediction for the design problem.
\newblock Feb. 2022.

\bibitem[Fong et~al.(2024)Fong, Holmes, and Walker]{Fong2024-bx}
E.~Fong, C.~Holmes, and S.~G. Walker.
\newblock Martingale posterior distributions.
\newblock \emph{J. R. Stat. Soc. Series B Stat. Methodol.}, 85\penalty0
  (5):\penalty0 1357--1391, Feb. 2024.

\bibitem[Frazer et~al.(2021)Frazer, Notin, Dias, Gomez, Min, Brock, Gal, and
  Marks]{Frazer2021-hn}
J.~Frazer, P.~Notin, M.~Dias, A.~Gomez, J.~K. Min, K.~Brock, Y.~Gal, and D.~S.
  Marks.
\newblock Disease variant prediction with deep generative models of
  evolutionary data.
\newblock \emph{Nature}, 599\penalty0 (7883):\penalty0 91--95, Nov. 2021.

\bibitem[Frazier(2018)]{Frazier2018-ze}
P.~I. Frazier.
\newblock A tutorial on bayesian optimization.
\newblock July 2018.

\bibitem[Gruver et~al.(2023)Gruver, Stanton, Frey, Rudner, Hotzel,
  Lafrance-Vanasse, Rajpal, Cho, and Wilson]{Gruver2023-sf}
N.~Gruver, S.~D. Stanton, N.~C. Frey, T.~G.~J. Rudner, I.~Hotzel,
  J.~Lafrance-Vanasse, A.~Rajpal, K.~Cho, and A.~G. Wilson.
\newblock Protein design with guided discrete diffusion.
\newblock In \emph{Thirty-seventh Conference on Neural Information Processing
  Systems}, Nov. 2023.

\bibitem[Hansen and Ostermeier(2001)]{Hansen2001-dn}
N.~Hansen and A.~Ostermeier.
\newblock Completely derandomized self-adaptation in evolution strategies.
\newblock \emph{Evol. Comput.}, 9\penalty0 (2):\penalty0 159--195, 2001.

\bibitem[Hewitt and Savage(1955)]{Hewitt1955-ua}
E.~Hewitt and L.~J. Savage.
\newblock Symmetric measures on cartesian products.
\newblock \emph{Trans. Amer. Math. Soc.}, 1955.

\bibitem[Hie et~al.(2023)Hie, Shanker, Xu, Bruun, Weidenbacher, Tang, Wu, Pak,
  and Kim]{Hie2023-sr}
B.~L. Hie, V.~R. Shanker, D.~Xu, T.~U.~J. Bruun, P.~A. Weidenbacher, S.~Tang,
  W.~Wu, J.~E. Pak, and P.~S. Kim.
\newblock Efficient evolution of human antibodies from general protein language
  models.
\newblock \emph{Nat. Biotechnol.}, 42\penalty0 (2):\penalty0 275--283, Apr.
  2023.

\bibitem[Jarasch et~al.(2015)Jarasch, Koll, Regula, Bader, Papadimitriou, and
  Kettenberger]{jarasch2015developability}
A.~Jarasch, H.~Koll, J.~T. Regula, M.~Bader, A.~Papadimitriou, and
  H.~Kettenberger.
\newblock Developability assessment during the selection of novel therapeutic
  antibodies.
\newblock \emph{Journal of pharmaceutical sciences}, 104\penalty0 (6):\penalty0
  1885--1898, 2015.

\bibitem[Jiang et~al.(2023)Jiang, Sablayrolles, Mensch, Bamford, Chaplot,
  de~las Casas, Bressand, Lengyel, Lample, Saulnier, Lavaud, Lachaux, Stock,
  Le~Scao, Lavril, Wang, Lacroix, and El~Sayed]{Jiang2023-yy}
A.~Q. Jiang, A.~Sablayrolles, A.~Mensch, C.~Bamford, D.~S. Chaplot, D.~de~las
  Casas, F.~Bressand, G.~Lengyel, G.~Lample, L.~Saulnier, L.~R. Lavaud, M.-A.
  Lachaux, P.~Stock, T.~Le~Scao, T.~Lavril, T.~Wang, T.~Lacroix, and
  W.~El~Sayed.
\newblock Mistral {7B}.
\newblock Oct. 2023.

\bibitem[Jin et~al.(2021)Jin, Wohlwend, Barzilay, and Jaakkola]{Jin2021-ub}
W.~Jin, J.~Wohlwend, R.~Barzilay, and T.~Jaakkola.
\newblock Iterative refinement graph neural network for antibody
  sequence-structure co-design.
\newblock In \emph{International Conference of Learning Representations 2022},
  Oct. 2021.

\bibitem[Kalchbrenner et~al.(2016)Kalchbrenner, Espeholt, Simonyan, van~den
  Oord, Graves, and Kavukcuoglu]{Kalchbrenner2016-ao}
N.~Kalchbrenner, L.~Espeholt, K.~Simonyan, A.~van~den Oord, A.~Graves, and
  K.~Kavukcuoglu.
\newblock Neural machine translation in linear time.
\newblock \emph{arXiv.org}, 2016.

\bibitem[Kong et~al.(2023)Kong, Huang, and Liu]{Kong2023-yu}
X.~Kong, W.~Huang, and Y.~Liu.
\newblock End-to-end full-atom antibody design.
\newblock In \emph{\textit{Proceedings of the }40th \textit{International
  Conference on Machine Learning}}, Jan. 2023.

\bibitem[Lawson et~al.(2023)Lawson, Li, and Linderman]{Lawson2023-hc}
D.~Lawson, M.~Y. Li, and S.~Linderman.
\newblock {NAS-X}: Neural adaptive smoothing via twisting.
\newblock In \emph{Thirty-seventh Conference on Neural Information Processing
  Systems}, Nov. 2023.

\bibitem[Lawson et~al.(2024)Lawson, Ravent{\'o}s, Warrington, and
  Linderman]{Lawson2024-jd}
D.~Lawson, A.~Ravent{\'o}s, A.~Warrington, and S.~Linderman.
\newblock {SIXO}: smoothing inference with twisted objectives.
\newblock In \emph{Proceedings of the 36th International Conference on Neural
  Information Processing Systems}, number Article 2815 in NIPS '22, pages
  38844--38858, Red Hook, NY, USA, Apr. 2024. Curran Associates Inc.

\bibitem[Lee et~al.(2022)Lee, Yun, Nam, Fong, and Lee]{Lee2022-ij}
H.~Lee, E.~Yun, G.~Nam, E.~Fong, and J.~Lee.
\newblock Martingale posterior neural processes.
\newblock In \emph{The Eleventh International Conference on Learning
  Representations}, Sept. 2022.

\bibitem[Lu et~al.(2020)Lu, Hwang, Liu, Lee, Tsai, Li, and Wu]{Lu2020-qy}
R.-M. Lu, Y.-C. Hwang, I.-J. Liu, C.-C. Lee, H.-Z. Tsai, H.-J. Li, and H.-C.
  Wu.
\newblock Development of therapeutic antibodies for the treatment of diseases.
\newblock \emph{J. Biomed. Sci.}, 27\penalty0 (1):\penalty0 1, Jan. 2020.

\bibitem[Luo et~al.(2022)Luo, Su, Peng, Wang, Peng, and Ma]{Luo2022-ha}
S.~Luo, Y.~Su, X.~Peng, S.~Wang, J.~Peng, and J.~Ma.
\newblock Antigen-specific antibody design and optimization with
  diffusion-based generative models for protein structures.
\newblock In \emph{Advances in Neural Information Processing Systems 35}. Cold
  Spring Harbor Laboratory, July 2022.

\bibitem[Mascola and Haynes(2013)]{Mascola2013-dr}
J.~R. Mascola and B.~F. Haynes.
\newblock {HIV-1} neutralizing antibodies: understanding nature's pathways.
\newblock \emph{Immunol. Rev.}, 254\penalty0 (1):\penalty0 225--244, July 2013.

\bibitem[Miller(2018)]{Miller2018-wx}
J.~W. Miller.
\newblock A detailed treatment of doob's theorem.
\newblock \emph{arXiv: Statistics Theory}, 2018.

\bibitem[Notin et~al.(2022)Notin, Dias, Frazer, Marchena-Hurtado, Gomez, Marks,
  and Gal]{Notin2022-qw}
P.~Notin, M.~Dias, J.~Frazer, J.~Marchena-Hurtado, A.~N. Gomez, D.~Marks, and
  Y.~Gal.
\newblock Tranception: Protein fitness prediction with autoregressive
  transformers and inference-time retrieval.
\newblock In K.~Chaudhuri, S.~Jegelka, L.~Song, C.~Szepesvari, G.~Niu, and
  S.~Sabato, editors, \emph{Proceedings of the 39th International Conference on
  Machine Learning}, volume 162 of \emph{Proceedings of Machine Learning
  Research}, pages 16990--17017. PMLR, 2022.

\bibitem[Olsen et~al.(2022{\natexlab{a}})Olsen, Boyles, and
  Deane]{Olsen2022-nt}
T.~H. Olsen, F.~Boyles, and C.~M. Deane.
\newblock Observed antibody space: A diverse database of cleaned, annotated,
  and translated unpaired and paired antibody sequences.
\newblock \emph{Protein Sci.}, 31\penalty0 (1):\penalty0 141--146, Jan.
  2022{\natexlab{a}}.

\bibitem[Olsen et~al.(2022{\natexlab{b}})Olsen, Moal, and Deane]{Olsen2022-hy}
T.~H. Olsen, I.~H. Moal, and C.~M. Deane.
\newblock {AbLang}: an antibody language model for completing antibody
  sequences.
\newblock \emph{Bioinform Adv}, 2\penalty0 (1):\penalty0 vbac046, June
  2022{\natexlab{b}}.

\bibitem[Olsen et~al.(2023)Olsen, Abanades, Moal, and Deane]{Olsen2023-fk}
T.~H. Olsen, B.~Abanades, I.~H. Moal, and C.~M. Deane.
\newblock {KA-Search}, a method for rapid and exhaustive sequence identity
  search of known antibodies.
\newblock \emph{Sci. Rep.}, 13\penalty0 (1):\penalty0 1--11, July 2023.

\bibitem[Phad et~al.(2022)Phad, Pinto, Foglierini, Akhmedov, Rossi, Malvicini,
  Cassotta, Fregni, Bruno, Sallusto, and Lanzavecchia]{Phad2022-bh}
G.~E. Phad, D.~Pinto, M.~Foglierini, M.~Akhmedov, R.~L. Rossi, E.~Malvicini,
  A.~Cassotta, C.~S. Fregni, L.~Bruno, F.~Sallusto, and A.~Lanzavecchia.
\newblock Clonal structure, stability and dynamics of human memory {B} cells
  and circulating plasmablasts.
\newblock \emph{Nat. Immunol.}, 23\penalty0 (7):\penalty0 1076--1085, June
  2022.

\bibitem[Prihoda et~al.(2022)Prihoda, Maamary, Waight, Juan, Fayadat-Dilman,
  Svozil, and Bitton]{Prihoda2022-zy}
D.~Prihoda, J.~Maamary, A.~Waight, V.~Juan, L.~Fayadat-Dilman, D.~Svozil, and
  D.~A. Bitton.
\newblock {BioPhi}: A platform for antibody design, humanization, and humanness
  evaluation based on natural antibody repertoires and deep learning.
\newblock \emph{MAbs}, 14\penalty0 (1):\penalty0 2020203, 2022.

\bibitem[Rapp et~al.(2024)Rapp, Bremer, and Romero]{Rapp2024-mo}
J.~T. Rapp, B.~J. Bremer, and P.~A. Romero.
\newblock Self-driving laboratories to autonomously navigate the protein
  fitness landscape.
\newblock \emph{Nature Chemical Engineering}, 1\penalty0 (1):\penalty0 97--107,
  Jan. 2024.

\bibitem[Raybould et~al.(2020)Raybould, Kovaltsuk, Marks, and
  Deane]{Raybould2020-us}
M.~I.~J. Raybould, A.~Kovaltsuk, C.~Marks, and C.~M. Deane.
\newblock {CoV-AbDab}: the coronavirus antibody database.
\newblock \emph{Bioinformatics}, 37\penalty0 (5):\penalty0 734--735, Aug. 2020.

\bibitem[Richardson et~al.(2021)Richardson, Galson, Kellam, Kelly, Smith,
  Palser, Watson, and Deane]{Richardson2021-xi}
E.~Richardson, J.~D. Galson, P.~Kellam, D.~F. Kelly, S.~E. Smith, A.~Palser,
  S.~Watson, and C.~M. Deane.
\newblock A computational method for immune repertoire mining that identifies
  novel binders from different clonotypes, demonstrated by identifying
  anti-pertussis toxoid antibodies.
\newblock \emph{MAbs}, 13\penalty0 (1):\penalty0 1869406, 2021.

\bibitem[Riesselman et~al.(2018)Riesselman, Ingraham, and
  Marks]{Riesselman2018-nm}
A.~J. Riesselman, J.~B. Ingraham, and D.~S. Marks.
\newblock Deep generative models of genetic variation capture the effects of
  mutations.
\newblock \emph{Nat. Methods}, 15\penalty0 (10):\penalty0 816--822, Oct. 2018.

\bibitem[Russo et~al.(2018)Russo, Van~Roy, Kazerouni, Osband, and
  Wen]{Russo2018-vr}
D.~J. Russo, B.~Van~Roy, A.~Kazerouni, I.~Osband, and Z.~Wen.
\newblock A tutorial on thompson sampling.
\newblock \emph{Foundations and Trends\textregistered{} in Machine Learning},
  11\penalty0 (1):\penalty0 1--96, 2018.

\bibitem[Sella and Hirsh(2005)]{Sella2005-to}
G.~Sella and A.~E. Hirsh.
\newblock The application of statistical physics to evolutionary biology.
\newblock \emph{Proc. Natl. Acad. Sci. U. S. A.}, 102\penalty0 (27):\penalty0
  9541--9546, July 2005.

\bibitem[Shin et~al.(2021)Shin, Riesselman, Kollasch, McMahon, Simon, Sander,
  Manglik, Kruse, and Marks]{Shin2021-lc}
J.-E. Shin, A.~J. Riesselman, A.~W. Kollasch, C.~McMahon, E.~Simon, C.~Sander,
  A.~Manglik, A.~C. Kruse, and D.~S. Marks.
\newblock Protein design and variant prediction using autoregressive generative
  models.
\newblock Apr. 2021.

\bibitem[Shuai et~al.(2023)Shuai, Ruffolo, and Gray]{Shuai2022-gp}
R.~W. Shuai, J.~A. Ruffolo, and J.~J. Gray.
\newblock {IgLM}: Infilling language modeling for antibody sequence design.
\newblock \emph{Cell Syst.}, 14\penalty0 (11):\penalty0 979--989.e4, Nov. 2023.

\bibitem[Sinai et~al.(2020)Sinai, Wang, Whatley, Slocum, Locane, and
  Kelsic]{sinai2020adalead}
S.~Sinai, R.~Wang, A.~Whatley, S.~Slocum, E.~Locane, and E.~Kelsic.
\newblock Adalead: A simple and robust adaptive greedy search algorithm for
  sequence design.
\newblock \emph{arXiv preprint}, 2020.

\bibitem[Stanton et~al.(2022)Stanton, Maddox, Gruver, Maffettone, Delaney,
  Greenside, and Wilson]{Stanton2022-ru}
S.~Stanton, W.~Maddox, N.~Gruver, P.~Maffettone, E.~Delaney, P.~Greenside, and
  A.~G. Wilson.
\newblock Accelerating {B}ayesian optimization for biological sequence design
  with denoising autoencoders.
\newblock In K.~Chaudhuri, S.~Jegelka, L.~Song, C.~Szepesvari, G.~Niu, and
  S.~Sabato, editors, \emph{Proceedings of the 39th International Conference on
  Machine Learning}, volume 162 of \emph{Proceedings of Machine Learning
  Research}, pages 20459--20478. PMLR, 2022.

\bibitem[Touvron et~al.(2023)Touvron, Martin, Stone, Albert, Almahairi, Babaei,
  Bashlykov, Batra, Bhargava, Bhosale, Bikel, Blecher, Ferrer, Chen, Cucurull,
  Esiobu, Fernandes, Fu, Fu, Fuller, Gao, Goswami, Goyal, Hartshorn, Hosseini,
  Hou, Inan, Kardas, Kerkez, Khabsa, Kloumann, Korenev, Koura, Lachaux, Lavril,
  Lee, Liskovich, Lu, Mao, Martinet, Mihaylov, Mishra, Molybog, Nie, Poulton,
  Reizenstein, Rungta, Saladi, Schelten, Silva, Smith, Subramanian, Tan, Tang,
  Taylor, Williams, Kuan, Xu, Yan, Zarov, Zhang, Fan, Kambadur, Narang,
  Rodriguez, Stojnic, Edunov, and Scialom]{touvron2023llama}
H.~Touvron, L.~Martin, K.~Stone, P.~Albert, A.~Almahairi, Y.~Babaei,
  N.~Bashlykov, S.~Batra, P.~Bhargava, S.~Bhosale, D.~Bikel, L.~Blecher, C.~C.
  Ferrer, M.~Chen, G.~Cucurull, D.~Esiobu, J.~Fernandes, J.~Fu, W.~Fu,
  B.~Fuller, C.~Gao, V.~Goswami, N.~Goyal, A.~Hartshorn, S.~Hosseini, R.~Hou,
  H.~Inan, M.~Kardas, V.~Kerkez, M.~Khabsa, I.~Kloumann, A.~Korenev, P.~S.
  Koura, M.-A. Lachaux, T.~Lavril, J.~Lee, D.~Liskovich, Y.~Lu, Y.~Mao,
  X.~Martinet, T.~Mihaylov, P.~Mishra, I.~Molybog, Y.~Nie, A.~Poulton,
  J.~Reizenstein, R.~Rungta, K.~Saladi, A.~Schelten, R.~Silva, E.~M. Smith,
  R.~Subramanian, X.~E. Tan, B.~Tang, R.~Taylor, A.~Williams, J.~X. Kuan,
  P.~Xu, Z.~Yan, I.~Zarov, Y.~Zhang, A.~Fan, M.~Kambadur, S.~Narang,
  A.~Rodriguez, R.~Stojnic, S.~Edunov, and T.~Scialom.
\newblock Llama 2: Open foundation and fine-tuned chat models, 2023.

\bibitem[Trippe et~al.(2023)Trippe, Wu, Naesseth, Blei, and
  Cunningham]{Trippe2023-ej}
B.~L. Trippe, L.~Wu, C.~A. Naesseth, D.~Blei, and J.~P. Cunningham.
\newblock Practical and asymptotically exact conditional sampling in diffusion
  models.
\newblock July 2023.

\bibitem[Truong and Bepler(2023)]{Truong2023-dk}
T.~F. Truong, Jr and T.~Bepler.
\newblock {PoET}: A generative model of protein families as
  sequences-of-sequences.
\newblock In \emph{Thirty-seventh Conference on Neural Information Processing
  Systems}, Nov. 2023.

\bibitem[Wang et~al.(2023)Wang, Hu, and Zhang]{Wang2023-vj}
K.~Wang, X.~Hu, and J.~Zhang.
\newblock Fast clonal family inference from large-scale {B} cell repertoire
  sequencing data.
\newblock \emph{Cell Rep Methods}, 3\penalty0 (10):\penalty0 100601, Oct. 2023.

\bibitem[Weinstein et~al.(2022)Weinstein, Amin, Frazer, and
  Marks]{Weinstein2022-tv}
E.~N. Weinstein, A.~N. Amin, J.~Frazer, and D.~S. Marks.
\newblock Non-identifiability and the blessings of misspecification in models
  of molecular fitness and phylogeny.
\newblock \emph{Adv. Neural Inf. Process. Syst.}, Dec. 2022.

\bibitem[Whiteley and Lee(2014)]{Whiteley2014-db}
N.~Whiteley and A.~Lee.
\newblock Twisted particle filters.
\newblock \emph{aos}, 42\penalty0 (1):\penalty0 115--141, Feb. 2014.

\bibitem[Yang et~al.(2019)Yang, Wu, and Arnold]{Yang2019-xe}
K.~K. Yang, Z.~Wu, and F.~H. Arnold.
\newblock Machine-learning-guided directed evolution for protein engineering.
\newblock \emph{Nat. Methods}, 16\penalty0 (8):\penalty0 687--694, Aug. 2019.

\bibitem[Yang et~al.(2022)Yang, Lu, and Fusi]{yang2022convolutions}
K.~K. Yang, A.~X. Lu, and N.~Fusi.
\newblock Convolutions are competitive with transformers for protein sequence
  pretraining.
\newblock In \emph{ICLR2022 Machine Learning for Drug Discovery}, 2022.

\bibitem[Zhao et~al.(2024)Zhao, Brekelmans, Makhzani, and Grosse]{Zhao2024-au}
S.~Zhao, R.~Brekelmans, A.~Makhzani, and R.~Grosse.
\newblock Probabilistic inference in language models via twisted sequential
  monte carlo.
\newblock In \emph{\textit{Proceedings of the }41st \textit{International
  Conference on Machine Learning}}, Apr. 2024.

\end{thebibliography}

\appendix

\section{Details of CloneLM and CloneBO}
\subsection{Data collection and training CloneLM}\label{app: training data}
\paragraph{Clonal family data}
We downloaded all data units of human single chain data on OAS~\citep{Olsen2022-nt}. For both light and heavy chain data, we put 10\% of these units into a test set, 10\% into a validation set, and 80\% into a train set.
We annotated clonal families in each of these data units with FastBCR~\citep{Wang2023-vj} using the default parameters \texttt{cluster\_thre = 3, overlap\_thre = 0.1, consensus\_thre = 0.8}.
We removed any clonal families with fewer than 25 sequences.
We ended up with 731 thousand heavy chain clonal families for training, 81 thousand for validation, and 96 thousand for testing; and 26 thousand light chain clonal families for training, 4 thousand for validation, and 4 thousand for testing.

\paragraph{Training CloneLM} We trained CloneLM using Mistral 7B as our base architecture \citep{Jiang2023-yy}.
We scale down the model size to $24$ layers of attention blocks with $16$ attention heads and a hidden size of $1024$ across embedding and all intermediate hidden states. 
We set our maximum context size to $2048$.
We end up with a Mistral model containing 377 million parameters.
For training, we use a batch size of $2$, a gradient accumulation step of $4$, and we sweep learning rates over $\{0.0005, 0.00025, 0.0001\}$ using a constant learning rate scheduler. All training is performed on 4 NVIDIA A100-SXM4-80GB GPUs. For human light chain data, we train for 24 hours for $40$ epochs. 
For human heavy chain data, we train for 48 hours for $1$ epoch.

\subsection{Twisted Sequential Monte Carlo}
\label{app:sequential_is}

\paragraph{Twisted distributions}
We approximate the marginal $\tilde p_{M+1}(X_{M+1}^{(:l)}, X_{0:M})$ just as in Section~\ref{sec: approx post} by replacing $F_{1:N}^{M+1}$ in Eqn.~\ref{eq: approx post} with \mbox{$F_{1:N}^{M+1, (:l)} = \sum_{l'=1}^{l}F_n^{M+1, (l')} +F^{M}_n$},
\[\tilde p_{M+1}^{(:l)}(X_{M+1}^{(:l)}, X_{0:M})\propto p(X_{1:M}, X_{M+1}^{(:l)}|X_0)p(Y_{1:N}|F^{M+1, (:l)}_{1:N}).\]
We call this approximation the ``twisted'' distribution.

If we pretend that these twisted distributions are exact marginals and that $\tilde p_M$ is also the marginal of $\tilde p_{M+1}$, we can therefore sample each sequence letter-by-letter according to
\begin{equation}\label{eqn: smc update}
    X_{M+1}^{(l+1)}\sim \tilde p_{M+1}^{(:l+1)}(X_{M+1}^{(l+1)}|X_{M+1}^{(:l)}, X_{0:M})\propto p(X_{M+1}^{(l+1)}|X_{M+1}^{(:l)}, X_{0:M})p(Y_{1:N}|F^{M+1, (:l+1)}_{1:N}).
\end{equation}
The first term in Eqn.~\ref{eqn: smc update} samples the next letter according to the unconditional distribution.
The second term is a bias that upweights letters $X_{d, M+1}^{(l+1)}$ for which $F_{1:N}^{M+1, (:l+1)}$ correlates with $Y_{1:N}$;
    this usually means upweighting letters that are more likely if sequences that were measured to have high $Y_n$, $\hat X_n$, were included in the clonal family.

\paragraph{Importance sampling}
The twisted distributions are not exactly the marginals.
We can correct for this discrepancy with sequential weighted importance sampling.
Say we have $X_{0:M}, X_{M+1}^{(:l)}$ approximately sampled from $\tilde p_{M+1}^{(:l)}(X_{0:M}, X_{M+1}^{(:l)})$ with importance weight $w^{M+1, (:l)}$.
Then we can calculate the importance weight of $X_{0:M}, X_{M+1}^{(:l+1)}$ by multiplying by the ratio between $\tilde p_{M+1}^{(:l+1)}(X_{0:M}, X_{M+1}^{(:l+1)})$ and $\tilde p_{M+1}^{(:l+1)}(X_{M+1}^{(l+1)}|X_{M+1}^{(:l)}, X_{0:M})\tilde p_{M+1}^{(:l)}(X_{0:M}, X_{M+1}^{(:l)})$, so,
\begin{equation*}
    \begin{aligned}\frac{w^{M+1, (:l+1)}}{w^{M+1, (:l)}}
    \propto\frac{p(X_{M+1}^{(l+1)}|X_{M+1}^{(:l)}, X_{0:M})}{ \tilde p_{M+1}^{(:l+1)}(X_{M+1}^{(l+1)}|X_{M+1}^{(:l)}, X_{0:M})}\times\frac{p(Y_{1:N}|F_{1:N}^{M+1, (:l+1)})}{ p(Y_{1:N}|F_{1:N}^{M+1, (:l)})}.
    \end{aligned}
\end{equation*}
Therefore if we have $D$ samples $X_{0:M+1}^{1:D}$ with weights $w^{M+1}_{1:D}$ then we can approximately sample from $\tilde p_{M+1}$ by sampling $X_{0:M+1}^{d}$ with probability $\tilde w^{M+1}_{d}=\frac{w^{M+1}_{d}}{\sum_{d'} w^{M+1}_{d'}}$.

\paragraph{Sequential Monte Carlo} 
Say we are iteratively sampling and weighting $D$ sets of sequences and the importance weight for one set $w_d^{M, (:l)}$ becomes much smaller than that of the others.
Ideally we wouldn't waste any more compute on sampling the rest of the sequence.
This is the idea of sequential Monte Carlo --- 
    while generating each set of sequence letter-by-letter, every so often, we resample the sets of sequences with probabilities $\tilde w^{M,(:l)}_{1:D}$.
To decide when to resample, we calculate the essential sample size $\sum_d(\tilde w^{M,(:l)}_{d})^2$ and resample when it goes below $\sqrt{D}$, a classic heuristic.
After we resample, we reset the weights $w_d=1/D$.
As $D\to\infty$ we expect to approximate $\tilde p_{M+1}$ arbitrarily well.

We also note when using the predictive distributions of CloneLM, Eqn.~\ref{eqn: f contrib} is an approximation.
The discrepancy comes from the fact that $F^{M+1}_n$ is the conditional probability of $\hat X_n$ as the $M+2$nd sequence while $\tilde F^{M+1}_n:= \sum_{l}F_n^{M+1, (l)} +F^{M}_n$ is the probability of $\hat X_n$ as the $M+1$st sequence.
For $p$, these probabilities are identical, but this may not be exactly the case for CloneLM.
Therefore, once we have sampled all the letters in a sequence $X_M$ then we have sampled from a distribution proportional to
\[p(X_{0:M})p(Y_{1:N}|\tilde F^{M}_{1:N}).\]
Therefore we also resample at this stage after multiplying the importance weight of sample $d$, $w_d^{M}$ by
\[\frac{p(Y_{1:N}|F^{M}_{1:N})}{p(Y_{1:N}|\tilde F^{M}_{1:N})}.\]

\subsection{Experimental Details of CloneBO}\label{app: clonebo details}
Before conditional generation, we normalized $Y_{1:N}$ to
to \mbox{$\tilde Y_{n} = (Y_{n} -\mathrm{start mean})/\mathrm{start std}$} where $\mathrm{start mean}$ and $\mathrm{start std}$ are the mean and standard deviation of the initial dataset $Y_{1:N}$.
In our experiments we used $D=4$ during twisted SMC, and generated clones of size $M=6$.
We tempered $\sigma$ by the maximum number of sequences we conditioned on, i.e. we used $\sigma = \tilde \sigma / \sqrt{75}$  where $\tilde\sigma = 0.25$.
We run each experiment on a single NVIDIA A100 GPU with 80GB of memory; $100$ steps with CloneBO takes roughly 10 hours.

\section{Experimental details}\label{app: exp details}
\subsection{Baselines}
We implemented Sapiens using the code in \url{https://github.com/Merck/Sapiens} under the MIT licence.
We suggested mutations to a sequence by taking the highest likelihood mutation suggested by \texttt{sapiens.predict\_scores} that had not previously been measured.

We implemented LaMBO using the code in \url{https://github.com/samuelstanton/lambo} under the Apache-2.0 licence.
We used default hyperparameters for the masked language model version of LaMBO.
To restrict mutations to the CDRs, we kept sampling mutations from LaMBO until only the CDR was modified.

To build LaMBO-Ab we pretrained the latent space of a LaMBO MLM model on a training set of 100000 antibody sequences.
We built the training set by taking one sequence from each of 100000 random clonal families from the CloneLM training set.

We also compare to two other genetic algorithms with trained surrogates, AdaLead and Genetic; a NN ensemble Bayesopt method, “Evo-BO”; an evolution method, CMA-ES~\citep{Hansen2001-dn}; an RL method, Dyna-PPO~\citep{Angermueller2020-pa}; and an adaptive sampling method, CbAS~\citep{Brookes2019-fv}.
The first three methods are described in \citet{sinai2020adalead}.
We implemented these methods with code from FLEXS~\citep{sinai2020adalead} using the code from \url{https://github.com/samsinai/FLEXS/tree/master} under an Apache-2.0 license; we used default settings for all methods.

\subsection{Training oracles and initializing optimization}
\paragraph{Oracle of a fitness function of a clonal family} We trained an oracle language model adapted from Llama2 \citep{touvron2023llama} on a single reference human heavy chain clone. There are in total $10015$ sequences in the clone and we split them into $90\%$ train, $5\%$ validation, and $5\%$ test sets. Due to the scarcity of our data, we downscaled a Llama2 model with $7$ billion parameters by keeping $12$ hidden layers with hidden state size of $512$. We used $4$ attention heads, $4$ key-value heads, and kept the context size at $2048$. We ended up with a language model containing 50 million parameters. We trained our model on a single NVIDIA A100-SXM4-80GB GPU for $10$ epochs with a batch size of $32$, a gradient accumulation step of $2$, a learning rate of $0.0005$. 
We obtained a perplexity value of $1.2634$ on the validation set. 
Using the clone oracle, we start optimization with 2 randomly chosen sequences from the test set $\hat X_{1:2}$, and predictions from the oracle $Y_{1:2}$.

\paragraph{Oracles from lab measurements of therapeutic Antibodies}\label{app: lab oracle details}
We were provided temperature and binding measurements of 6880 sequences from an iterative optimization experiment performed in the lab.
We aligned these sequences to a reference antibody sequence and trained ensembles of 10 CARP/Bytenet models \citep{Kalchbrenner2016-ao, yang2022convolutions} on the one hot encodings of the aligned sequences to fit the temperature and binding data. The  K$_d$ model ensemble had a measured vs. predicted spearman correlation in crossvalidation of 0.95, while the the  T$_m$ model ensemble had an spearman correlation of 0.72.
We use the mean prediction of the ensemble as $f$.
We start optimization given the measurements of the first 1000 sequences of this experiment $\hat X_{1:1000}$, and predictions from the oracle $Y_{1:1000}$.

\subsection{Training data from iterative optimization}\label{app: lab validation visualize}
\begin{figure}[h]
    \centering
    \begin{subfigure}[b]{0.45\textwidth}
        \includegraphics[width=\textwidth]{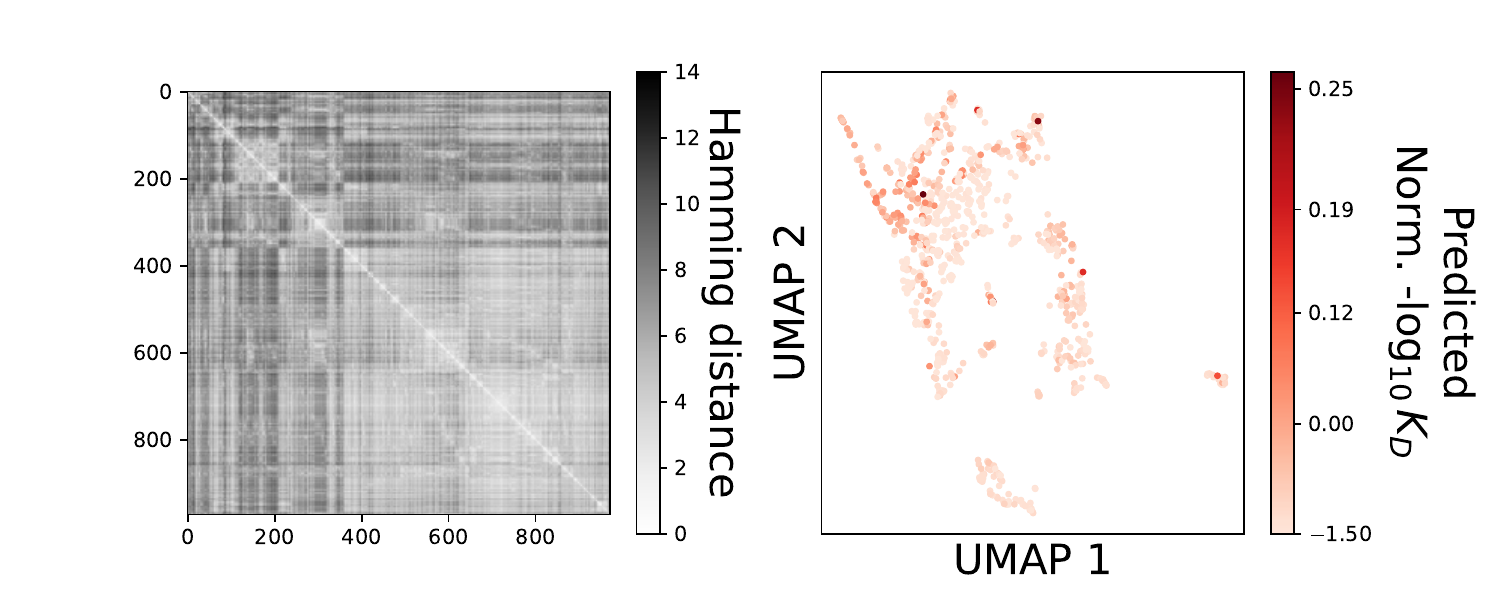}
        \caption{Sequences with binding data.}
    \end{subfigure}
    \begin{subfigure}[b]{0.45\textwidth}
        \includegraphics[width=\textwidth]{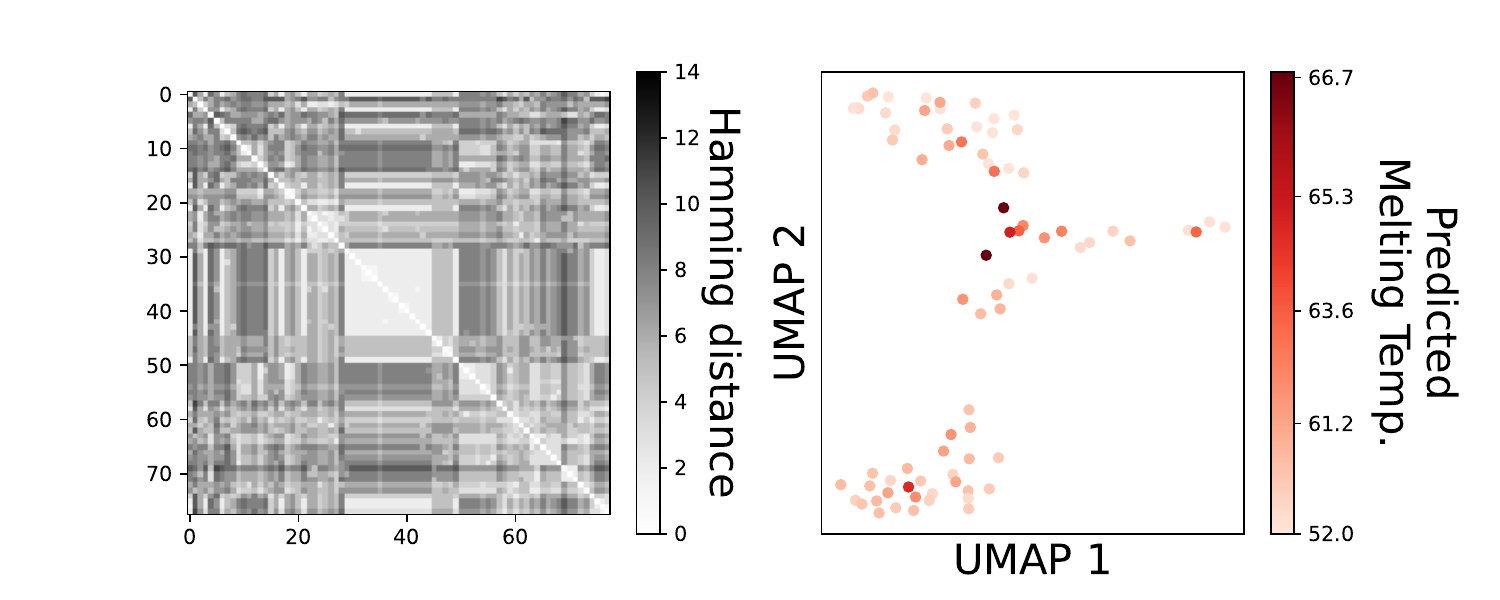}
        \caption{Sequences with stability data.}
    \end{subfigure}
    \caption{Starting pool for (a) binding and (b) stability optimization. We plot the Hamming distance matrices and UMAPs. sequences are coloured by predicted property}
    \label{fig:visualize train}
\end{figure}

\subsection{Lab validation}\label{app: lab validation details}

We built a predictor of synthesizability from measures of expression just the same as predictors of melting temperature and binding in Sec~\ref{app: lab oracle details}.
The predictor achieves a test AUROC of 0.87.

Designed sequences were synthesized via cell-free protein synthesis \citep{dopp2020simple} in 96-well format and purified via Protein A binding on Pierce magnetic beads. Purity and yield were confirmed before further analysis. Affinities ($-\log K_D$) were measured with Bio-Layer Interferometry (BLI) on an Octet instrument, with the antigen immobilized at three different dilutions of antibody. Thermostability (melting temperature) was measured by Nano differential scanning interferometry (NanoDSF) on an Uncle instrument.

\section{Supplementary results}
\subsection{More example generated clonal families}\label{app: clone examples}
In this section we show more light and heavy chain clonal families generated from CloneLM.
In Fig.~\ref{fig: clone ex1} and Fig.s~\ref{fig: light clone ex2}, \ref{fig: light clone ex3}, we see that sequences generated by CloneLM include can introduce insertions and deletions.
The large sets of deletions in the sequences of the clonal family in Fig.~\ref{fig: clone ex1} are due to the fact that some sequences in OAS are missing the beginning or end of their sequences \cite{Olsen2022-hy}.

\begin{figure}[H]
    \centering
    \begin{subfigure}[b]{0.98\textwidth}
        \includegraphics[width=\textwidth]{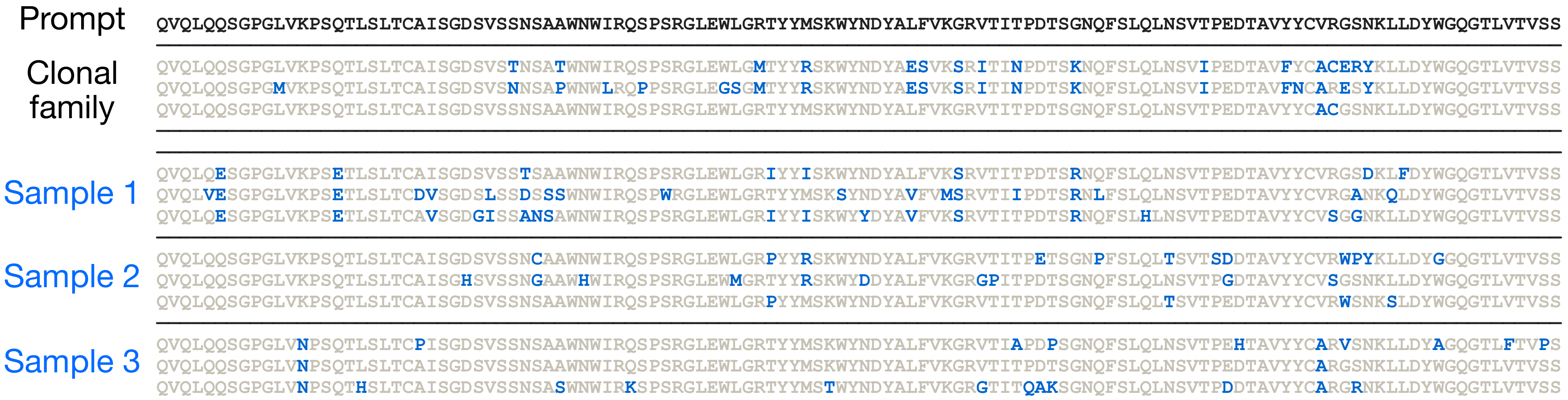}
        \caption{}
        \label{fig: clone ex3}
    \end{subfigure}
    \begin{subfigure}[b]{0.98\textwidth}
        \includegraphics[width=\textwidth]{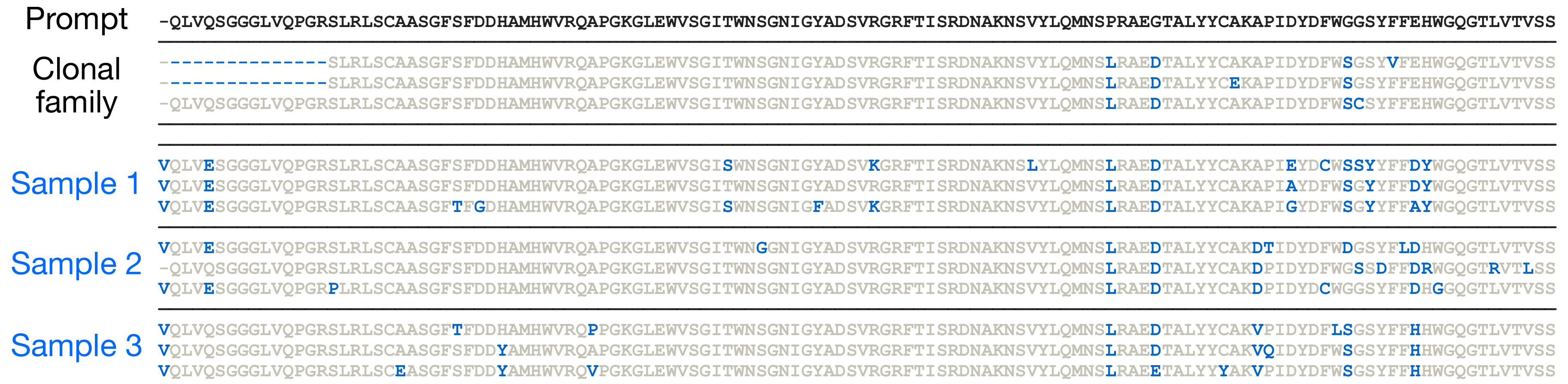}
        \caption{}
        \label{fig: clone ex1}
    \end{subfigure}
    \caption{\textbf{Examples of heavy chain clonal families generated by CloneLM.}}
    \label{fig: heavy clone examples}
\end{figure}

\begin{figure}[H]
    \centering
    \begin{subfigure}[b]{0.98\textwidth}
        \includegraphics[width=\textwidth]{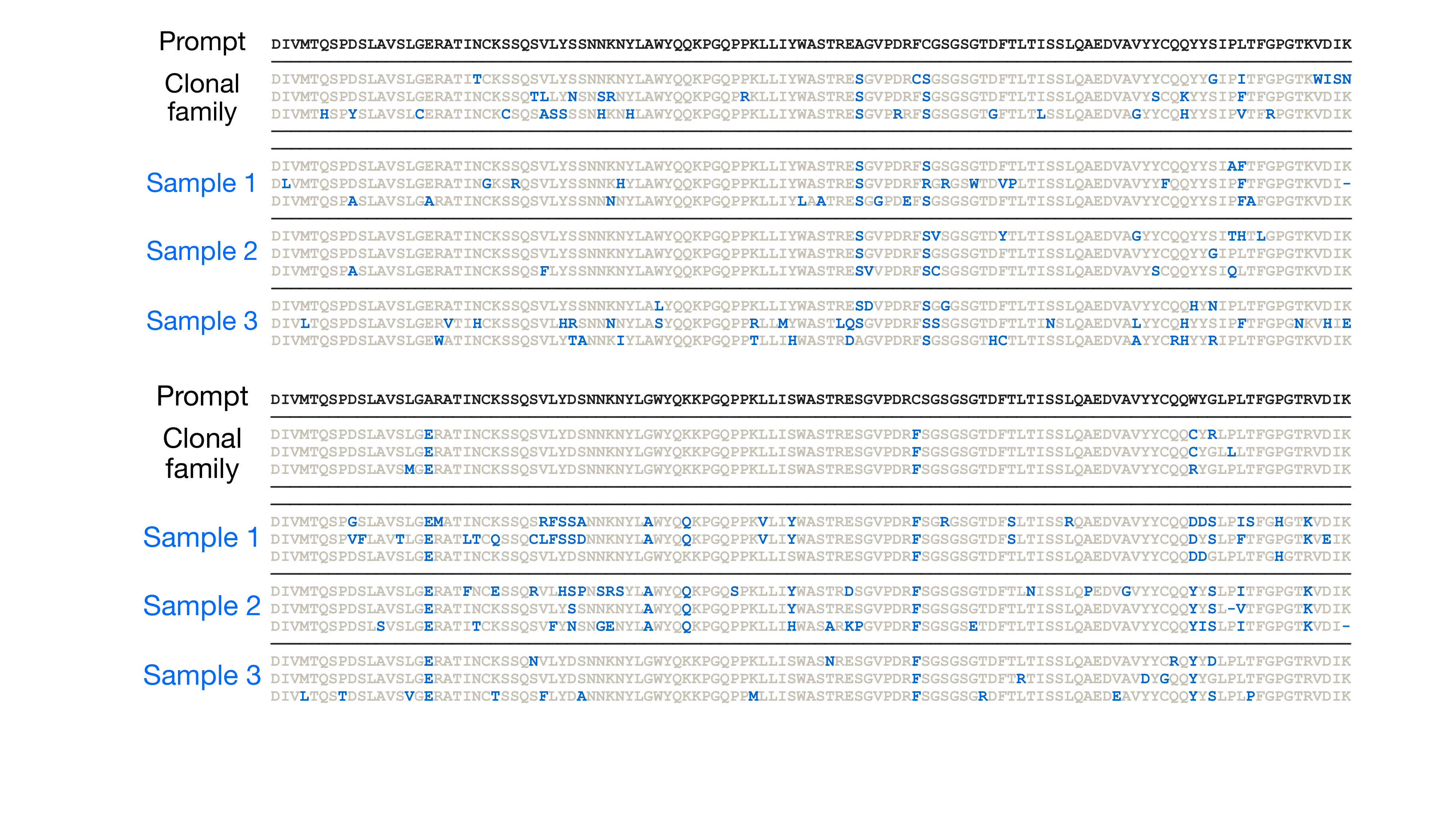}
        \caption{}
        \label{fig: light clone ex1}
    \end{subfigure}
    \begin{subfigure}[b]{0.98\textwidth}
        \includegraphics[width=\textwidth]{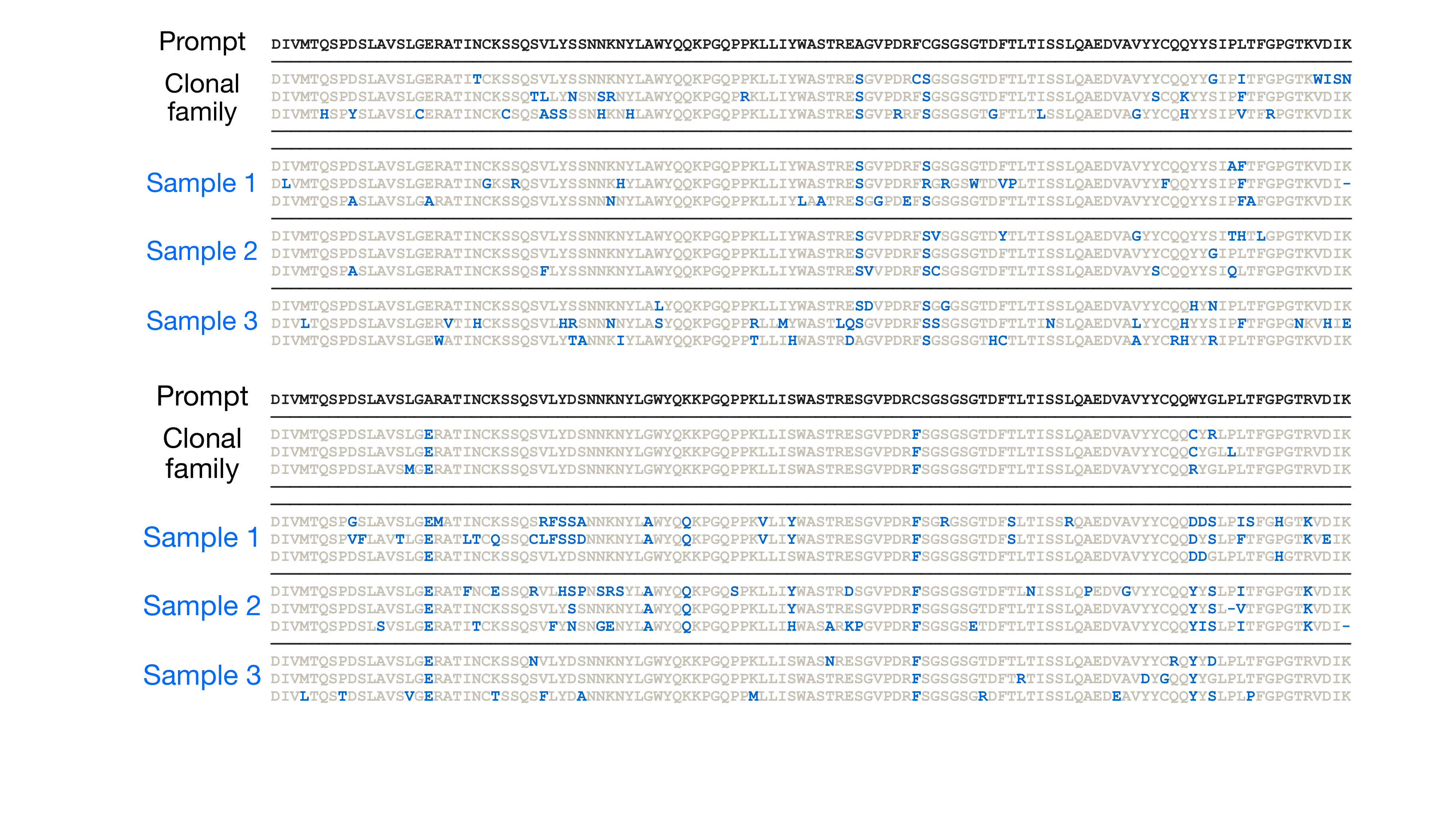}
        \caption{}
        \label{fig: light clone ex2}
    \end{subfigure}
    \begin{subfigure}[b]{0.98\textwidth}
        \includegraphics[width=\textwidth]{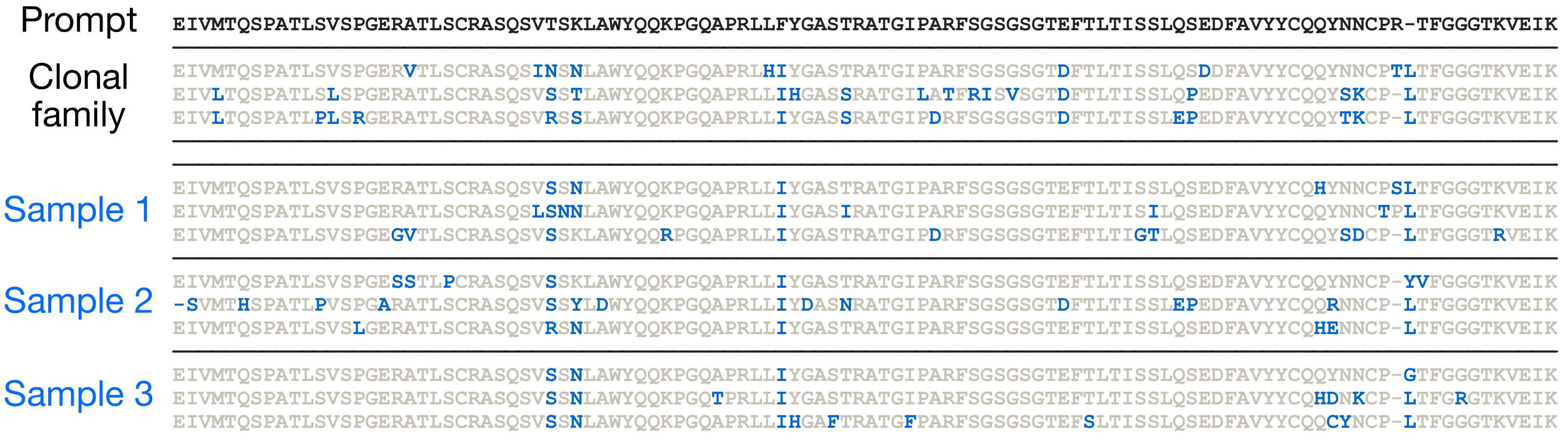}
        \caption{}
        \label{fig: light clone ex3}
    \end{subfigure}
    \caption{\textbf{Examples of light chain clonal families generated by CloneLM.}}
    \label{fig: light clone examples}
\end{figure}

\subsection{Twisted SMC fitting affinity data}\label{app:tsmc affinity res}
In Fig.~\ref{fig:tsmc aff} we show similar results to Fig.~\ref{fig: tsmc} for 75 laboratory measurements of binding.

\begin{figure}[H]
    \centering
    \begin{subfigure}[b]{0.365\textwidth}
        \includegraphics[width=\textwidth]{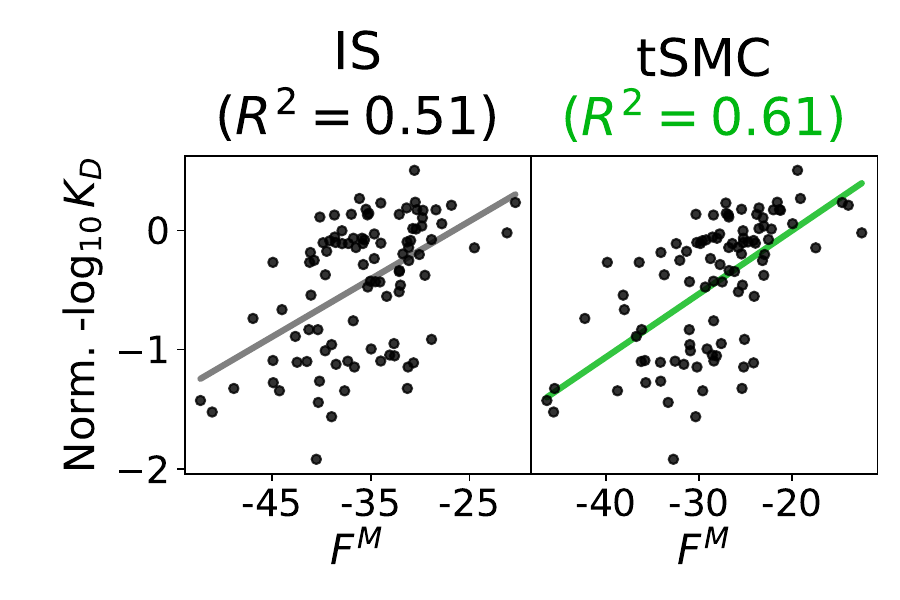}
        \caption{}
        \label{fig: kd fitness fit}
    \end{subfigure}
    \begin{subfigure}[b]{0.305\textwidth}
        \includegraphics[width=\textwidth]{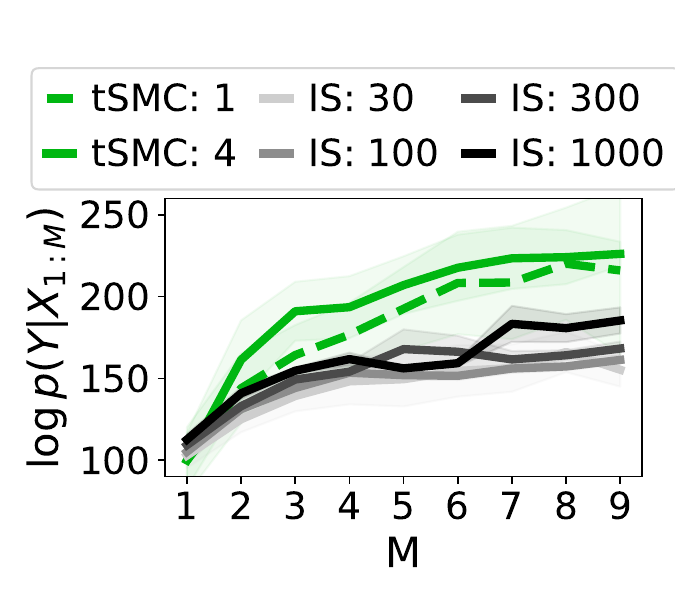}
        \caption{}
        \label{fig: tsmc aff M}
    \end{subfigure}
    \caption{\textbf{Twisted SMC fits laboratory measurements of affinity.} Experiments are similar to those in Fig.~\ref{fig: tsmc}.
    }
    \label{fig:tsmc aff}
\end{figure}

\subsection{Efficient optimization versus number of steps}\label{sec: against N}
\begin{figure}[H]
    \centering
    \includegraphics[width=0.65\linewidth]{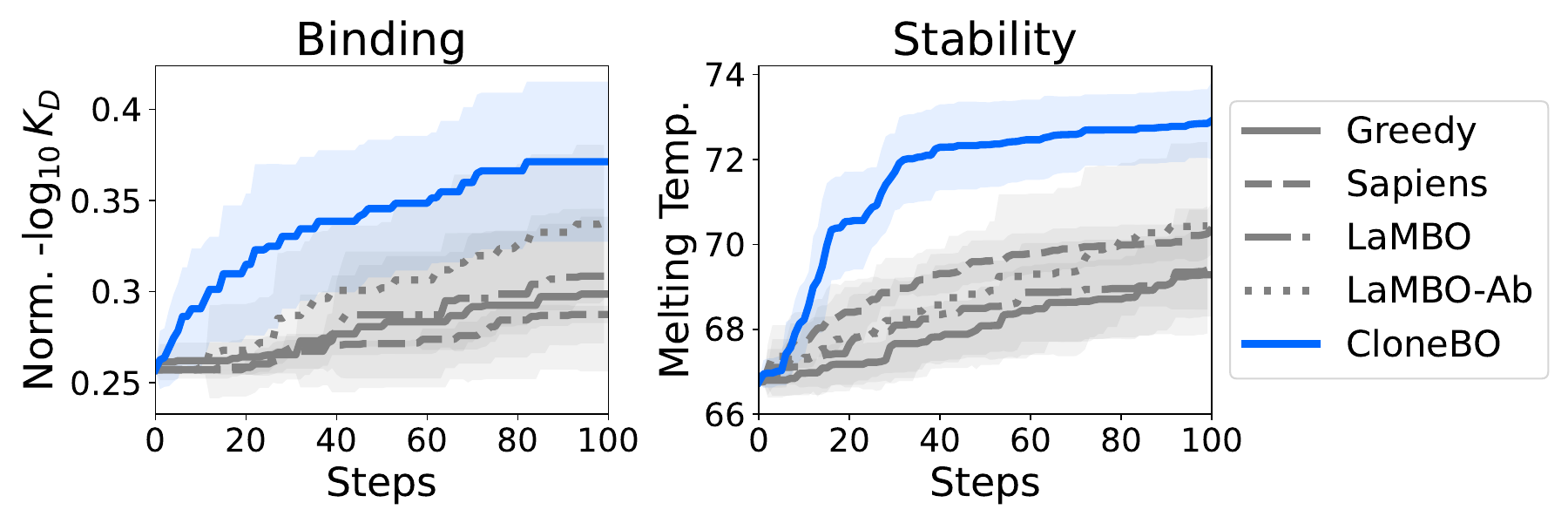}
    \caption{Results of Fig.~\ref{fig: oracle opt} for various $N$ for representative baselines.}
    \label{fig:against N}
\end{figure}

\begin{table}[!h]
\centering
\begin{tabular}{lccc}
\hline
Method & $\log p(X|\mathrm{clone})$ & Norm. $\log_{10} K_D$ & Melting temp. \\
\hline
CbAS & -27.42 & 0.2572 & 66.76 \\
Dyna-PPO & -27.42 & 0.2572 & 66.76 \\
CMA-ES & -27.42 & 0.2853 & 66.76 \\
EvoBO & -27.42 & 0.2572 & 66.76 \\
Genetic & -25.94 & 0.3001 & 70.19 \\
AdaLead & -25.30 & 0.2998 & 69.42 \\
Greedy & -26.65 & 0.2986 & 69.28 \\
LaMBO & -24.99 & 0.3085 & 69.40 \\
Sapiens & -24.51 & 0.2875 & 70.39 \\
LaMBO-Ab & -24.51 & 0.3381 & 70.44 \\
CloneBO & \textbf{-7.29} & \textbf{0.3713} & \textbf{72.92} \\
\hline
\end{tabular}
\caption{Result from Fig.~\ref{fig: in silico results} in table form. We report the mean best value achieved across 10 replicates.}
\end{table}

\subsection{Optimizing antibodies to bind SARS CoV}\label{sec: cov binding}
Above, we validated CloneBO using predictors trained on large-scale mutational data from a lab as well as in \textit{in vitro} experiments – these are the most reliable evaluations of CloneBO.
As another evaluation of CloneBO, and to compare it to structure-based design models,
here we explore optimizing the CDRH3s of antibodies for binding SARS CoV 1 and 2 in humans as measured by predictors used in\citet{Jin2021-ub} (downloaded from \url{https://github.com/wengong-jin/RefineGNN}).
We caution however that while we expect CloneBO to be a good prior for SARS binding, these predictors have large epistemic uncertainty as they are trained on only a few thousand extremely diverse sequences. We thus expect we are optimizing an objective similar to those in Fig.~\ref{fig:fit and N} so that CloneBO should perform best at low N.

These predictors were trained on only functional antibodies and may not generalize outside this set. So, we optimize binding + humanness (measured by IgLM  likelihood); we standardize both binding and humanness to the same variance.

We start with $N=1$ sequence from CoVAbDab. 
In Fig.~\ref{fig:cov validation} we see that for 6 randomly selected starting sequences, CloneBO is consistently among the most efficient methods for CoV1 and, at low N, for CoV2 as well.

We were interested in seeing if structure-based design methods could also be used for iterative design.
When structure is available, we therefore also compare to a state-of-the-art structure method, DiffAb \citep{Luo2022-ha}.
To perform iterative design with DiffAb, we greedily pick one of the 4 best measured sequences and optimize it as in Sec 4.3 of \citet{Luo2022-ha}. 
We see that CloneBO beats this algorithm.

\begin{figure}[!h]
    \centering
    \includegraphics[width=\linewidth]{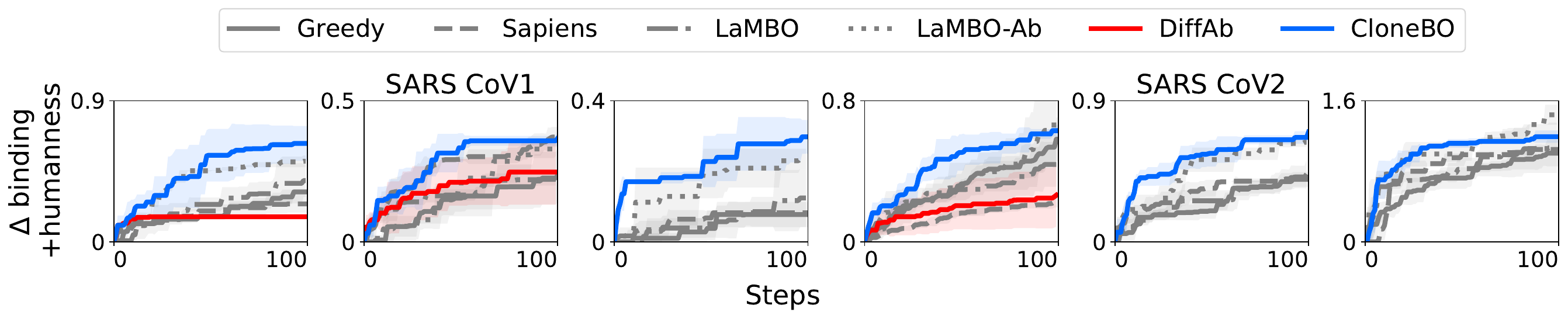}
    \caption{CloneBO efficiently optimizes 6 sequences from CovAbDab for binding to SARS CoV1 (first 3 columns) or CoV2 (last 3 columns). In particular it outperforms a structure-based baseline (DiffAb) for the 3 sequences with an available structure. We show mean and standard deviation achieved across 3 replicates.}
    \label{fig:cov validation}
\end{figure}

\subsection{Ablations of \textit{in silico} optimization}\label{sec: ablations}
\paragraph{CloneBO efficiently optimizes sequences by building an accurate posterior.}
We now show that CloneBO efficiently optimizes sequences by conditioning on experimental measurements and accurately sampling from the martingale posterior.
In Fig.~\ref{fig: ablations} we see optimization is often harmed by 1) not accurately sampling from the martingale posterior by only sampling clones of size $M=1$, 2) using a naive importance sampling procedure instead of twisted SMC, or 3) not conditioning on previous measurements.
We see the $M=1$ or importance sampling ablations have less of an effect when optimizing fitness, potentially because it is easier to condition on data from a function from the CloneBO prior.

\begin{figure}[!h]
    \centering
    \includegraphics[width=0.95\textwidth]{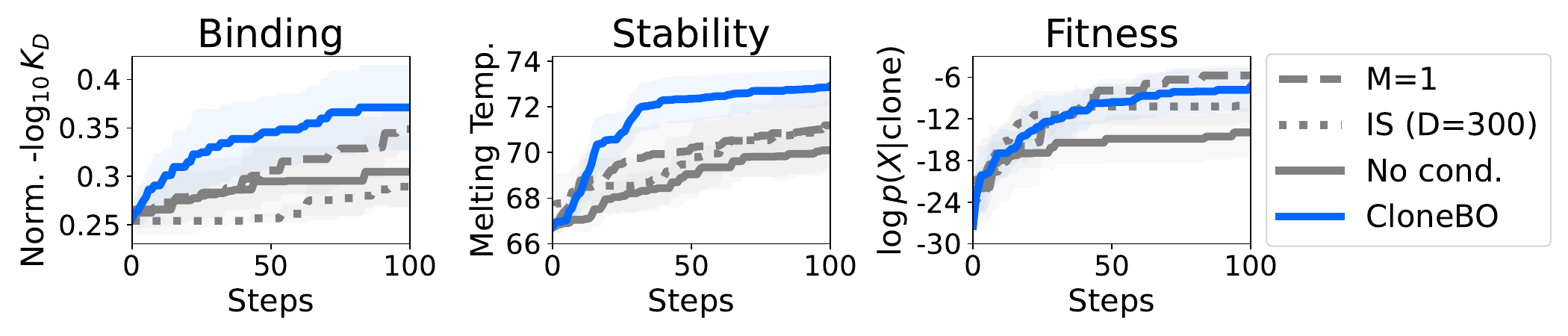}
    \caption{\textbf{CloneBO better optimizes antibodies than models that ablate accurately sampling from the posterior.} We shade a standard deviation across 3 replicates. We show results for optimizing binding \textit{in silico}, stability \textit{in silico}, and the fitness function of a clone.
    }
    \label{fig: ablations}
\end{figure}

\paragraph{CloneBO is robust to different starting pool sizes and deviations from its prior at low N.}

In Fig.~\ref{fig: ablate N} we optimized for stability as in Fig.~\ref{fig: oracle opt} of the paper with a starting pool of various $N$ measurements. 
We see that CloneBO outperforms our baselines across these different starting stabilities and starting pool sizes, especially at low $N$.

In Fig.~\ref{fig: clone opt} in the paper we showed that when optimizing an objective from CloneBO’s prior (the fitness function of a clone), $F$, CloneBO strongly outperforms baselines. Here we mix $F$ with a random neural network, $G$, and optimize $w F + (1-w) G$; $w\in[0, 1]$ controls how well the CloneBO prior describes the objective. We start with $N=2$ sequences. 
In Fig.~\ref{fig: ablate w} we see that in this small $N$ setting, CloneBO outperforms other methods even when the objective only somewhat matches the prior ($w=0.4$).
Even at $w=0.2$, CloneBO is the best method at very low N (up to N=25).

\begin{figure}[H]
    \captionsetup[subfigure]{labelformat=empty}
    \centering
        \begin{subfigure}[b]{0.9\textwidth}
        \makebox[0pt][r]{\raisebox{45pt}{(a)}\hspace{10pt}}
        \includegraphics[width=\textwidth]{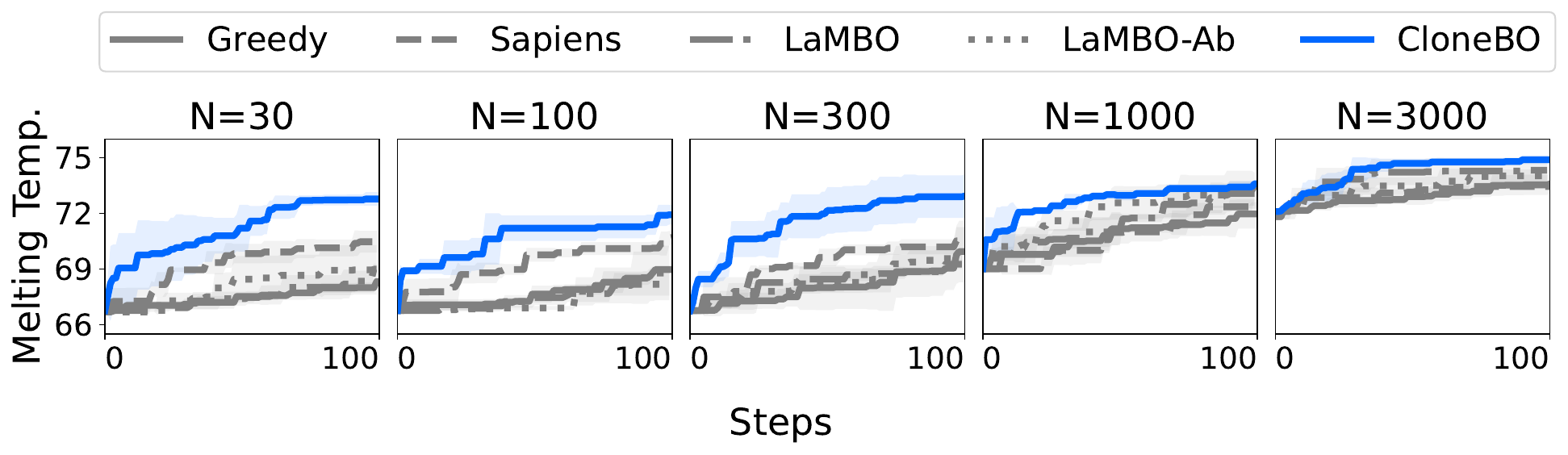}
        \caption{}
        \label{fig: ablate N}
        \vspace{-0.5cm}
    \end{subfigure}
        \begin{subfigure}[b]{0.9\textwidth}
        \makebox[0pt][r]{\raisebox{45pt}{(b)}\hspace{10pt}}
        \includegraphics[width=\textwidth]{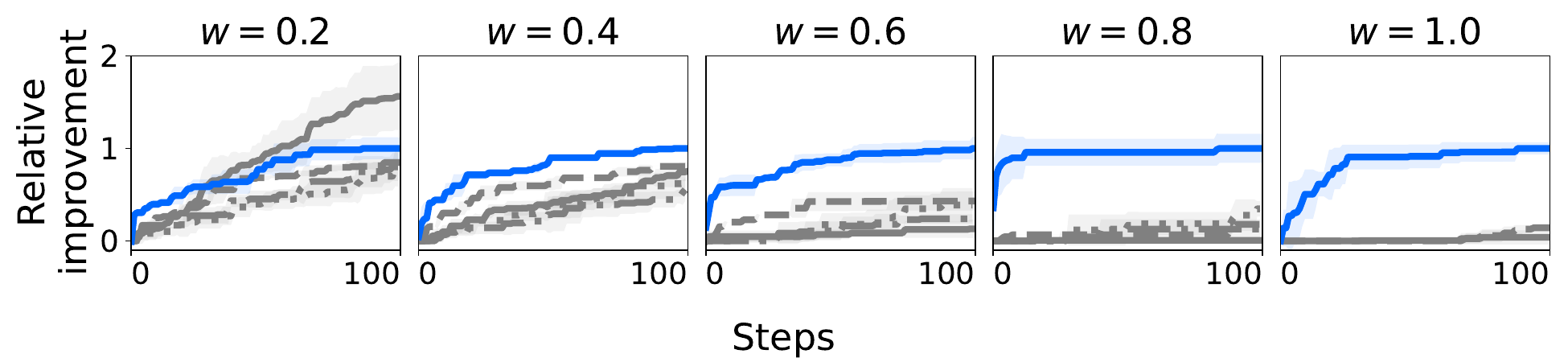}
        \caption{}
        \label{fig: ablate w}
        \vspace{-0.5cm}
    \end{subfigure}
    \caption{\textbf{CloneBO is robust} to ablating (a) the starting pool size or (b) how well its prior describes the objective. We show the mean and standard deviations of the best achieved value across 3 replicates.}
    \label{fig:fit and N}
\end{figure}

\paragraph{Sensitivity to hyperparameters \textit{in silico}}\label{sec: app sensitivity}
We now investigate the sensitivity of our results to 3 hyperparameters: the noise level in the likelihood $\tilde \sigma$, the maximum number of allowed mutations $L$, and the number of sequences we draw $X_0$ from $K$, as described in Sec.~\ref{sec: thompson}.
In our experiments above we use $\log_2\tilde \sigma = -2$, $L=3$, and $K=4$.

We picked $K$ and $L$ based on intuition.
We picked $\tilde\sigma$ by looking at the clones generated when conditioned on the initial binding data \textit{in silico} when suing values $\log_2\tilde \sigma = -4, -3, -2, -1$;
we noted that when $\log_2\tilde \sigma = -4, -3$, generated clones contained very short or long sequences, indicating it was hard to find clones that fit the data with little noise.
Thus we picked $\log_2\tilde \sigma = -2$ to fit the data with as little noise as possible.
Performing the same procedure for the \textit{in silico} stability data, we arrived at the same value $\log_2\tilde \sigma = -2$ was a good choice.
We fixed this value for all our other experiments.

We see in Fig.~\ref{fig: sensitivity sweep K} that $K=1$ performs badly when optimizing binding, likely due to a higher chance of getting stuck in a local minima; otherwise CloneBO is not very sensitive to $K$.
In Fig.~\ref{fig: sensitivity sweep L} we see CloneBO is also not very sensitive to $L$.
In Fig.~\ref{fig: sensitivity sweep sigma} we note CloneBO is sensitive to the choice of $\tilde \sigma$ but our procedure described above picked the optimal $\tilde\sigma$ for binding and stability.
For fitness, we noted the posterior is easy to sample from even at $\log_2\tilde \sigma = -3$ (we see this manifest in Fig.~\ref{fig: ablations} as well) and indeed a smaller value of $\tilde\sigma$ is optimal.
This suggests one can optimize antibodies more efficiently by improving the CloneBO likelihood, potentially by picking a better $\tilde\sigma$ or by adding a prior and marginalizing over it.

\begin{figure}[H]
    \centering
    \begin{subfigure}[b]{0.343\textwidth}
        \includegraphics[width=\textwidth]{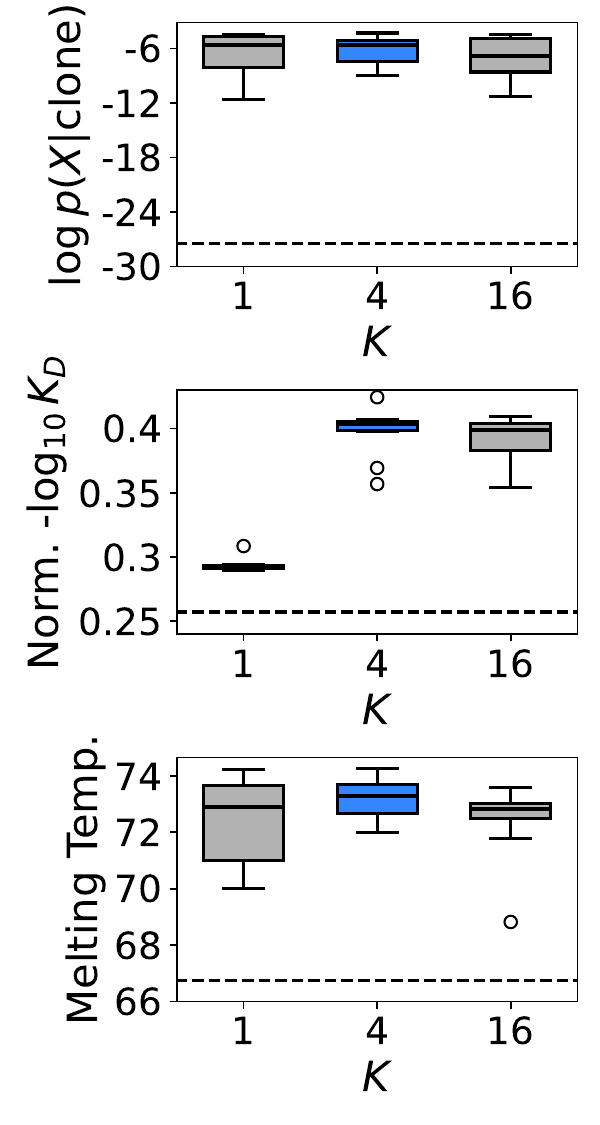}
        \caption{}
        \label{fig: sensitivity sweep K}
    \end{subfigure}
    \begin{subfigure}[b]{0.3\textwidth}
        \includegraphics[width=\textwidth]{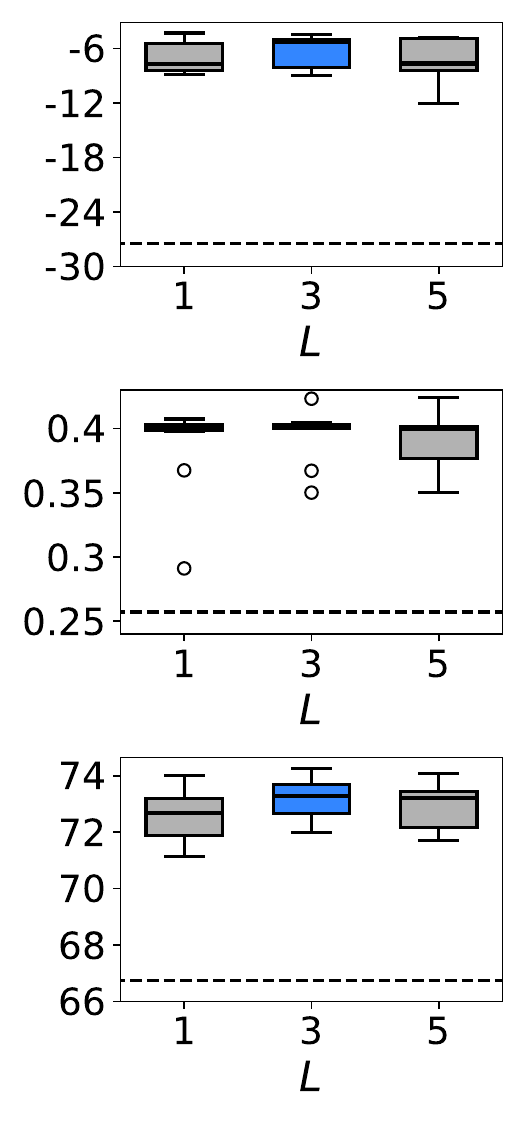}
        \caption{}
        \label{fig: sensitivity sweep L}
    \end{subfigure}
    \begin{subfigure}[b]{0.3\textwidth}
        \includegraphics[width=\textwidth]{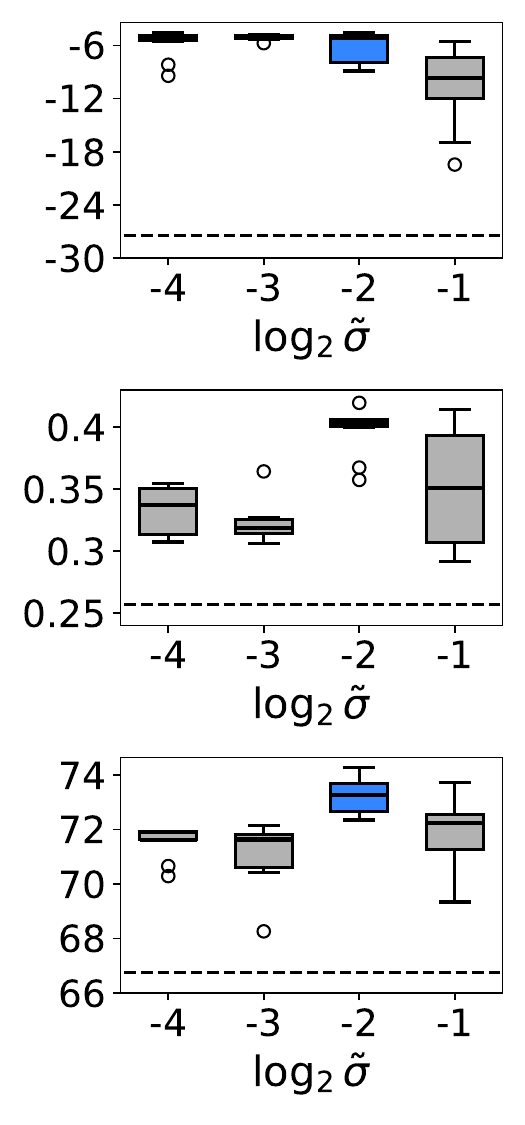}
        \caption{}
        \label{fig: sensitivity sweep sigma}
    \end{subfigure}
    \caption{\textbf{CloneBO sensitivity to hyperparameters for fitness, binding and stability.} Experiments are run with 10 replicates; blue boxes represent hyperparameters used in the main text
. a) Sensitivity to size of pool $X_0$ is randomly selected from, $K$. b) Sensitivity to maximum allowed number of mutations, $L$. c) sensitivity to noise in likelihood, $\tilde\sigma$.  
    }
    \label{fig:sensitivity}
\end{figure}

\subsection{Additional discussion of \textit{in vitro} results}\label{sec: additional in vitro}
We were able to measure the affinities of 9 sequences for CloneBO, and 11 for LaMBO-Ab, and the melting temperatures of 19 sequences from CloneBO and 10 sequences from LaMBO-Ab.
Adding in previously measured sequences, we were able to get two more affinity measurements for CloneBO and 2 more for LaMBO-Ab, and no other affinity measurements.
Note when interpreting these results that affinity and stability may be correlated with dropout and whether or not a sequence was previously measured.
In Fig.~\ref{fig: alt kd measures} we plot the affinity measurements removing the previously measured sequences; 
the results are qualitatively similar to those of Fig.~\ref{fig: in vitro results}.
\begin{figure}[H]
    \centering
    \includegraphics[width=0.4\linewidth]{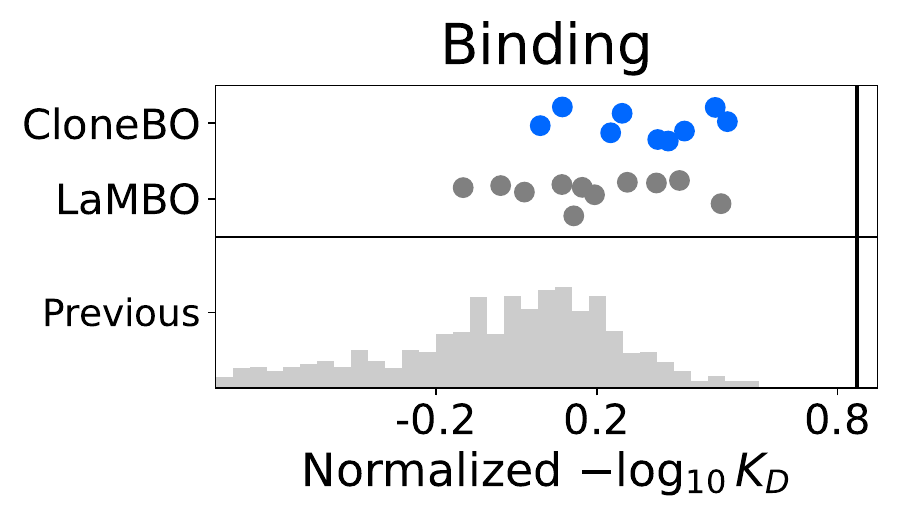}
    \caption{The result of Fig.~\ref{fig: lab opt} for affinity with only sequences that were newly measured.}
    \label{fig: alt kd measures}
\end{figure}

\section{Theoretical results}
\subsection{Analytic form of marginal likelihood}\label{app: marginal lik proof}
\begin{proposition}\label{prop: marginal lik proof}
    \textbf{(Proof of Prop.~\ref{prop: marginal lik})}
    For some constant $C$,
    \[\log p(Y_{1:N}|F_{1:N}) = -\frac 1 2 \log \mathrm{Cov}(F_{1:N})  + \frac 1 2 R^2 + \log\Phi(R)+C'\]
\end{proposition}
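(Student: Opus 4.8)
The plan is to compute the marginal likelihood by directly integrating out the two nuisance parameters $C$ and $T$ against their improper uniform priors. Writing $\bar F,\bar Y$ for the sample means of $F_{1:N},Y_{1:N}$ and setting $\tilde F_n = F_n-\bar F$, $\tilde Y_n = Y_n-\bar Y$, the model $Y_n\mid\mathrm{clone}\sim N(TF_n+C,\sigma^2 I)$ with $C\sim\mathrm{Uniform}(-\infty,\infty)$ and $T\sim\mathrm{Uniform}(0,\infty)$ gives
\[
p(Y_{1:N}\mid F_{1:N}) \;\propto\; \int_0^\infty\!\!\int_{-\infty}^{\infty}\exp\!\Big(-\tfrac{1}{2\sigma^2}\textstyle\sum_{n=1}^N (Y_n - T F_n - C)^2\Big)\,dC\,dT,
\]
and the proposition follows by evaluating these two Gaussian integrals in order and taking logarithms.

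First I would integrate over $C$. For fixed $T$ the exponent is quadratic in $C$, minimized at $C^\star=\bar Y-T\bar F$, with $\sum_n(Y_n-TF_n-C)^2 = \sum_n(\tilde Y_n - T\tilde F_n)^2 + N(C-C^\star)^2$; integrating $C$ over $\mathbb R$ therefore only contributes the $F$-independent factor $\sqrt{2\pi\sigma^2/N}$, which I absorb into the constant. What remains is $\int_0^\infty\exp\!\big(-\tfrac{1}{2\sigma^2}\sum_n(\tilde Y_n-T\tilde F_n)^2\big)\,dT$, whose exponent is a quadratic $aT^2-2bT+c$ in $T$ with $a=\sum_n\tilde F_n^2$, $b=\sum_n\tilde F_n\tilde Y_n$, $c=\sum_n\tilde Y_n^2$. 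Completing the square as $a(T-b/a)^2+(c-b^2/a)$ lets the $T$-integral factor into $\exp\!\big(-\tfrac{1}{2\sigma^2}(c-b^2/a)\big)\int_0^\infty\exp\!\big(-\tfrac{a}{2\sigma^2}(T-b/a)^2\big)\,dT$.

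Then I would handle the truncated Gaussian integral by substituting $u=\tfrac{\sqrt a}{\sigma}(T-b/a)$, so the lower limit becomes $-b/(\sigma\sqrt a)$ and, using $\int_{-x}^{\infty}e^{-u^2/2}\,du=\sqrt{2\pi}\,\Phi(x)$, the integral equals $\tfrac{\sqrt{2\pi}\,\sigma}{\sqrt a}\,\Phi\!\big(\tfrac{b}{\sigma\sqrt a}\big)$. Assembling and taking logs yields $-\tfrac12\log a+\log\Phi\!\big(\tfrac{b}{\sigma\sqrt a}\big)+\tfrac{b^2}{2a\sigma^2}-\tfrac{c}{2\sigma^2}+\text{const}$. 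Finally I would match terms with Eqn.~\ref{eqn: likelihood}: since $a=N\,\mathrm{Cov}(F_{1:N})$ up to a constant, $-\tfrac12\log a=-\tfrac12\log\mathrm{Cov}(F_{1:N})+\text{const}$; with $\mathrm{Std}(Y_{1:N})=\sqrt{c/N}$ one checks $R=\sqrt N\,\tfrac{\mathrm{Std}(Y)}{\sigma}\mathrm{Cor}(F,Y)=\tfrac{\sqrt c}{\sigma}\cdot\tfrac{b}{\sqrt{ac}}=\tfrac{b}{\sigma\sqrt a}$, so the argument of $\Phi$ is exactly $R$ and $\tfrac{b^2}{2a\sigma^2}=\tfrac12 R^2$; and $-c/(2\sigma^2)$ depends only on $Y_{1:N}$, hence joins the constant $D$.

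There is no genuine analytic obstacle — the whole proof is two Gaussian integrals — so the one thing to be careful about is bookkeeping: tracking which multiplicative factors ($\sqrt{2\pi\sigma^2/N}$, $\sqrt{2\pi}\,\sigma$, the $\log N$ from $a=N\,\mathrm{Cov}(F_{1:N})$, and the $Y$-only term $-c/(2\sigma^2)$) are legitimately constant with respect to $F_{1:N}$ and may be folded into $D$, and observing that it is precisely the one-sided support $T>0$ in the prior that replaces an otherwise trivial $T$-integral with the factor $\Phi(R)$ — the term that penalizes fitness assignments anti-correlated with the measurements.
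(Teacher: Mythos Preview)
Your proof is correct and follows essentially the same route as the paper: integrate out the intercept first, then perform the half-Gaussian integral over the positive slope, and identify $R=b/(\sigma\sqrt a)$. The only cosmetic differences are that the paper regularizes the flat prior on the intercept by a Gaussian $N(0,\tau^2)$ and sends $\tau\to\infty$ at the end (so the centering appears via the limiting projection $\Sigma^{-1}\to\sigma^{-2}(I-e\otimes e)$ rather than by subtracting sample means), and it writes the slope integral in the form $\sigma_\beta\,e^{\mu_\beta^2/2\sigma_\beta^2}\Phi(\mu_\beta/\sigma_\beta)$ before identifying $\mu_\beta/\sigma_\beta$ with $R$; your direct treatment of the improper prior and explicit completion of the square is equivalent and arguably more transparent.
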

\begin{proof}
We first put a wide prior on $M$, $M\sim N(0, \tau^2)$, and then later send $\tau\to \infty$.
Then we can marginalize $M$ out to get 
\begin{gather*}
    Y_n\sim N(\beta F_n, \sigma^2I + n \tau^2e\otimes e)\\
\end{gather*}
where we define $e$ to be the vector with $1/\sqrt{N}$ in each position, $e=\vec {1}/\sqrt{N}$.
Calling $\Sigma = \sigma^2I + n \tau^2e\otimes e$, $\sigma_\beta^2 = (F_{1:N}^T \Sigma^{-1}F_{1:N})^{-1}$, $\mu_\beta = \sigma_\beta^2 F_{1:N}^T \Sigma^{-1}Y_{1:N}$, we get
\begin{equation*}
    \begin{aligned}
        \int_0^\infty  p(Y_{1:N}, \beta|F_{1:N})d\beta \propto & \int_0^\infty  e^{ \beta F_{1:N}^T \Sigma^{-1}Y_{1:N}-\frac 1 2\beta^2 F_{1:N}^T \Sigma^{-1}F_{1:N}}d\beta\\
        =& (2\pi\sigma_\beta^2)^{1/2}e^{\mu_\beta^2/2\sigma_\beta^2}P(N(\mu_\beta, \sigma^2_\beta)>0)\\
        \propto& \sigma_\beta e^{\mu_\beta^2 / {2\sigma_\beta^2}}\Phi\left(\frac{\mu_\beta} {\sigma_\beta}\right).
    \end{aligned}
\end{equation*}

Now, 
\begin{equation*}
    \begin{aligned}
        \Sigma^{-1} =& (\sigma^2(I-e\otimes e) + (\sigma^2 + n\tau^2)e\otimes e)^{-1}\\
        =&\sigma^{-2}(I-e\otimes e) + (\sigma^2 + n\tau^2)^{-1}e\otimes e\\
        \to& \sigma^{-2}(I-e\otimes e)\text{ as }\tau\to\infty\\
        = &N \sigma^{-2}\left(\frac 1 N I-\left(\frac 1 N \vec 1\right)\otimes \left(\frac 1 N \vec 1\right)\right).
    \end{aligned}
\end{equation*}
Therefore, as $\tau\to\infty$, $F_{1:N}^T \Sigma^{-1}F_{1:N}$ is $N \sigma^{-2}$ times the variation of $F_{1:N}$, $\mathrm{Var}(F_{1:N})$, and $F_{1:N}^T \Sigma^{-1}Y_{1:N}$ is $N \sigma^{-2}$ times the covariance of $F_{1:N}$ and $Y_{1:N}$, $\mathrm{Cov}(F_{1:N}, Y_{1:N})$.

Therefore, if we call $\frac{\mu_\beta} {\sigma_\beta}\to \sqrt{N}\sigma^{-1}\mathrm{Cor}(F_{1:N}, Y_{1:N})\mathrm{Std}(Y_{1:N})=R$ then we get that when we send $\tau\to\infty$,
\[\log p(Y_{1:N}|F_{1:N}) = -\frac 1 2 \log \mathrm{Cov}(F_{1:N})  + \frac 1 2 R^2 + \log\Phi(R)+C'\]
for some constant $C'$.
\end{proof}

\subsection{Convergence of approximate posterior}\label{app: approx post proof}

We show that the approximate posteriors converge defined in Eqn.~\ref{eq: approx post} converge to the true posterior.
We make the mild assumption that the hypothetical latent variable $\mathrm{clone}$ has been defined such that $\mathrm{clone}\mapsto p(X|\mathrm{clone})$ is measurable and no two variables $\mathrm{clone}_1$, $\mathrm{clone}_2$ have the same conditional distribution $p(X|\mathrm{clone}_1)\neq p(X|\mathrm{clone}_2)$.
We also assume that all measured sequences are antibodies that can plausibly, although maybe with extremely low likelihood, appear in a clone together, i.e. $p(X_0, \hat X_{1:N})\neq 0$.
Finally, we assume the sequences $\hat X_{1:N}$ are sufficiently diverse so that their log likelihoods cannot all be identical for any $\mathrm{clone}$, i.e. for some $\epsilon>0$, $\mathrm{Cov}(F_{1:N})>\epsilon$ with probability $1$ under $p(\mathrm{clone}|X_0)$\footnote{We need this assumption to ensure the density $p(Y_{1:N}|F^M_{1:N})$ is bounded above. Alternatively, one can assume a proper prior on $M$ by picking $\tau$ large but finite in the proof of Prop.~\ref{prop: marginal lik proof}.}.
\begin{proposition}\label{prop: post converge proof}
    \textbf{(Proof of Prop.~\ref{prop: post converge})}
    Assume $\mathrm{clone}\mapsto p(X|\mathrm{clone})$ is measurable and injective, $p(X_0, \hat X_{1:N})\neq 0$, and $\mathrm{Cov}(F_{1:N})>\epsilon$ or some $\epsilon>0$ with probability $1$ under $p(\mathrm{clone}|X_0)$.
    Then, as $M\to\infty$, the approximate and true posteriors converge in total variation --
    \[\Vert \tilde p_M(X_{1:M}|Y_{1:N}, \hat X_{1:N}, X_0)-p(X_{1:M}|Y_{1:N}, \hat X_{1:N}, X_0)\Vert_{\mathrm{TV}}\to 0.\]
\end{proposition}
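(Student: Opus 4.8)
The plan is to express both distributions as densities against a single fixed reference measure and show the densities converge in $L^1$; the cleanest way to get a fixed reference (since the $X_{1:M}$ live in a growing space) is to lift everything to the space $\Omega$ of infinite sequences $X_{1:\infty}$. Let $Q:=p(X_{1:\infty}\mid X_0)$, whose $X_{1:M}$-marginal is $q_M:=p(X_{1:M}\mid X_0)$; write $g(F_{1:N}):=p(Y_{1:N}\mid F_{1:N})$ for the marginal likelihood of Prop.~\ref{prop: marginal lik}, and set $F^M_n:=\log p(\hat X_n\mid X_{0:M})$, $F_n:=\log p(\hat X_n\mid \mathrm{clone})$, $W:=p(Y_{1:N}\mid \hat X_{1:N},X_0)=\mathbb{E}_{p(\mathrm{clone}\mid X_0)}[g(F_{1:N})]$, and $Z_M:=\mathbb{E}_{q_M}[g(F^M_{1:N})]$. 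By definition and normalization $\tilde p_M$ has $q_M$-density $g(F^M_{1:N})/Z_M$; and since conditioning additionally on $(Y_{1:N},\hat X_{1:N})$ reweights only the law of $\mathrm{clone}$ by $g(F_{1:N})/W$ (leaving the law of $X_{1:\infty}$ given $\mathrm{clone}$ unchanged), and $\mathrm{clone}$ is a tail-measurable function of $X_{1:\infty}$ under $Q$ (justified below), the true posterior $p(X_{1:\infty}\mid Y_{1:N},\hat X_{1:N},X_0)$ has $Q$-density $g(F_{1:N})/W$, whose $X_{1:M}$-marginal is the target $p(X_{1:M}\mid Y_{1:N},\hat X_{1:N},X_0)$. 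Here $W\in(0,\infty)$ -- positivity from $p(X_0,\hat X_{1:N})\neq 0$, finiteness from $\mathrm{Cov}(F_{1:N})>\epsilon$ together with Prop.~\ref{prop: marginal lik} (since $R$ is bounded by the data, $g(F_{1:N})\le G$ a.s.) -- and $Z_M\in(0,\infty)$ as part of $\tilde p_M$ being well-defined. Since total variation does not increase under marginalization,
\[
\bigl\|\tilde p_M(\cdot\mid Y_{1:N},\hat X_{1:N},X_0)-p(\cdot\mid Y_{1:N},\hat X_{1:N},X_0)\bigr\|_{\mathrm{TV}}\ \le\ \tfrac12\,\mathbb{E}_Q\Bigl|\,g(F^M_{1:N})/Z_M-g(F_{1:N})/W\,\Bigr|,
\]
so it is enough to show the right-hand side vanishes.

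Next I would establish the pointwise ($Q$-a.s.) convergence of the integrand. Under $Q$, $X_{1:\infty}$ is exchangeable, so by de Finetti it is conditionally i.i.d.\ given a directing random measure, which the law of large numbers identifies with $p(\cdot\mid\mathrm{clone})=\lim_M\frac1M\sum_{i\le M}\delta_{X_i}$; the injectivity-and-measurability hypothesis on $\mathrm{clone}\mapsto p(X\mid\mathrm{clone})$ then makes $\mathrm{clone}$ a measurable function of this tail limit, with law $p(\cdot\mid X_0)$. Consequently $p(\hat X_n\mid X_{0:M})=\mathbb{E}[p(\hat X_n\mid\mathrm{clone})\mid X_{0:M}]$ is a bounded Doob martingale, and L\'evy's upward theorem gives $p(\hat X_n\mid X_{0:M})\to p(\hat X_n\mid\mathrm{clone})$ $Q$-a.s., i.e.\ $F^M_n\to F_n$ (with value $-\infty$ on $\{p(\hat X_n\mid\mathrm{clone})=0\}$). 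As $g$ extends continuously to $[-\infty,0]^N$ -- it vanishes if any coordinate is $-\infty$, since then $\mathrm{Var}(F_{1:N})=\infty$ while the remaining factor stays bounded -- this yields $g(F^M_{1:N})\to g(F_{1:N})$ $Q$-a.s. Granting $Z_M\to W$, the densities $f_M:=g(F^M_{1:N})/Z_M$ converge $Q$-a.s.\ to $f:=g(F_{1:N})/W$; since $f_M,f\ge 0$ with $\int f_M\,dQ=\int f\,dQ=1$, Scheff\'e's lemma upgrades this to $\mathbb{E}_Q|f_M-f|\to 0$, which with the display above proves the claim.

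The one nontrivial point -- and what I expect to be the main obstacle -- is $Z_M=\mathbb{E}_Q[g(F^M_{1:N})]\to W=\mathbb{E}_Q[g(F_{1:N})]$. Fatou's lemma gives $\liminf_M Z_M\ge W$ for free, but the matching upper bound needs an $M$-uniform integrable envelope for $g(F^M_{1:N})$, which is precisely what the hypothesis $\mathrm{Cov}(F_{1:N})>\epsilon$ is there to supply -- though applying it takes care, since the hypothesis constrains the limit $F_{1:N}$ rather than the surrogates $F^M_{1:N}$. From Prop.~\ref{prop: marginal lik}, $g(F^M_{1:N})\le c\,\mathrm{Var}(F^M_{1:N})^{-1/2}$ with $c$ depending only on the fixed data (as $R$ is data-bounded), and $\mathrm{Var}(F^M_{1:N})\to\mathrm{Var}(F_{1:N})>\epsilon$ $Q$-a.s., so $g(F^M_{1:N})$ is eventually bounded by a constant along a.e.\ path; I would close the remaining gap by controlling the rare event $\{\mathrm{Var}(F^M_{1:N})\le\epsilon/2\}$ (whose $Q$-probability tends to $0$), e.g.\ via uniform integrability of $\mathrm{Var}(F^M_{1:N})^{-1/2}$ -- i.e.\ the $X_i$ do not concentrate on the degenerate set $\{\mathrm{Var}=0\}$ uniformly in $M$ -- or, as the footnote to the hypothesis suggests, by instead taking a proper $N(0,\tau^2)$ prior on the affine offset (and, if needed, a proper slope prior) in Prop.~\ref{prop: marginal lik}, which makes $g$ bounded outright and lets $Z_M\to W$ follow from bounded convergence. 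Everything else -- the de Finetti identification of $\mathrm{clone}$, the martingale convergence, the Scheff\'e step, and the marginalization inequality -- is routine.
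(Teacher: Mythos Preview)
Your argument follows the same skeleton as the paper's: Doob/L\'evy martingale convergence gives $F^M_{1:N}\to F_{1:N}$ $Q$-a.s., hence $g(F^M_{1:N})\to g(F_{1:N})$; then one shows $Z_M\to W$ and concludes convergence in TV. The packaging differs in two places. First, the paper works directly on $X_{1:M}$ and bounds the TV distance by the explicit split
\[
\sum_{X_{1:M}}\Bigl|\tfrac{g(F^M_{1:N})}{Z_M}-\tfrac{g(F_{1:N})}{Z}\Bigr|\,q_M
\ \le\ \tfrac{|Z-Z_M|}{Z}\ +\ \tfrac{1}{Z}\,\mathbb{E}\bigl|g(F^M_{1:N})-g(F_{1:N})\bigr|,
\]
rather than lifting to $X_{1:\infty}$ and invoking Scheff\'e; both routes are equivalent here, and your lift plus Scheff\'e is arguably cleaner conceptually. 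Second---and this is the substantive point---you are more careful than the paper about the step $Z_M\to W$. The paper simply asserts that $p(Y_{1:N}\mid F^M_{1:N})$ is bounded ``since $\mathrm{Cov}(F_{1:N})>\epsilon$'' and applies bounded convergence, without addressing that the hypothesis controls $\mathrm{Cov}(F_{1:N})$ rather than $\mathrm{Cov}(F^M_{1:N})$; the footnote preceding the proposition acknowledges this by offering the proper-prior fix you also mention. So your identification of this as ``the main obstacle'' is accurate, and either of your proposed remedies (uniform integrability of $\mathrm{Var}(F^M_{1:N})^{-1/2}$, or taking $\tau<\infty$ to make $g$ bounded outright) closes the gap the paper leaves implicit.
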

\begin{proof}
    By our assumptions, by Doob's theorem \citep{Miller2018-wx}, if $\mathrm{clone}\sim p(\mathrm{clone}|X_0)$ and $X_m\sim p(X|\mathrm{clone})$ iid, with probability $1$,
    \[p(\hat X_n|X_{1:M})\to p(\hat X_n|X_{1:M})\]
    for each $n$.
    In particular, we get that $F^M_{1:N}\to F_{1:N}$ and therefore, since, assuming $\mathrm{Cov}(F_{1:N})>\epsilon$, $p(Y_{1:N}|F^M_{1:N})$ is a bounded function of $F_{1:N}$,
    \[E_{X_{1:M}, \mathrm{clone}\sim p(X_{1:M}|\mathrm{clone})p(\mathrm{clone}|X_0)}|p(Y_{1:N}|F^M_{1:N})-p(Y_{1:N}|F_{1:N})|\to 0.\]
    
    Now we show that the the normalizing constants of the approximate and true posteriors converge.
    \begin{equation}
        \begin{aligned}
            Z_M:=\int p(Y_{1:N}|F^M_{1:N}) dp(X_{1:M}|X_0) =&\int p(Y_{1:N}|F^M_{1:N}) \prod_{m=1}^Mdp(X_m|\mathrm{clone})  dp(\mathrm{clone}|X_0)\\
            =&E_{X_{1:M}, \mathrm{clone}\sim p(X_{1:M}|\mathrm{clone})p(\mathrm{clone}|X_0)}p(Y_{1:N}|F^M_{1:N})\\
            \to &E_{\mathrm{clone}\sim p(\mathrm{clone}|X_0)}p(Y_{1:N}|F_{1:N})\\
            =& \int p(Y_{1:N}|F_{1:N}) dp(\mathrm{clone}|X_0)=:Z
        \end{aligned}
    \end{equation}
    By our assumption that $p(X_0, \hat X_{1:N})\neq 0$, for a set of $\mathrm{clone}$ of probability greater than $0$ under $p(\mathrm{clone}|X_0)$, $p(\hat X_n|\mathrm{clone})>0$ for all $n$; for this set, $p(Y_{1:N}|F_{1:N})>0$ and therefore $Z>0$.

    Now we show that the approximate and true posteriors converge in total variation:
    \begin{equation}
        \begin{aligned}
            \sum_{X_{1:M}}&\left\vert\frac{p(Y_{1:N}|F^M_{1:N}) p(X_{1:M}|X_0)}{Z_M} -\frac{\int p(X_{1:M}|\mathrm{clone})p(Y_{1:N}|F_{1:N}) dp(\mathrm{clone}|X_0)}{Z}\right\vert\\
            \leq &\sum_{X_{1:M}}p(Y_{1:N}|F^M_{1:N}) p(X_{1:M}|X_0)\left\vert Z_M^{-1}-Z^{-1}\right\vert\\
            &+\frac 1 Z \sum_{X_{1:M}}\left|\int p(X_{1:M}|\mathrm{clone})\left(p(Y_{1:N}|F^M_{1:N})-p(Y_{1:N}|F_{1:N})\right)dp(\mathrm{clone}|X_0)\right|\\
            \leq &\frac{|Z-Z_M|}{Z}+\frac 1 Z E_{p(\mathrm{clone}|X_0)}|p(Y_{1:N}|F_{1:N})-p(Y_{1:N}|F^M_{1:N})|\to 0.
        \end{aligned}
    \end{equation}
\end{proof}

\end{document}